\PassOptionsToPackage{table}{xcolor}
\documentclass[journal, twocolumn]{IEEEtran}

\usepackage{amsmath,amsfonts}
\usepackage{array}
\usepackage{textcomp}
\usepackage{stfloats}
\usepackage{url}
\usepackage{verbatim}
\usepackage{graphicx}
\usepackage{subfiles}
\usepackage{subcaption}
\usepackage{booktabs}
\usepackage{multirow}
\usepackage{makecell}
\usepackage{algorithm}
\usepackage{algpseudocode}
\usepackage{enumitem}
\usepackage{fmtcount}
\usepackage{tcolorbox}
\usepackage{cite}
\usepackage{amssymb}
\usepackage{xcolor}
\usepackage{hyperref}
\usepackage{tabularx}
\usepackage{amsthm}

\usepackage{pifont}

\newtheorem{theorem}{Theorem}[section]
\newtheorem{proposition}[theorem]{Proposition}
\newtheorem{definition}[theorem]{Definition}

\newtheorem{problem}{Problem}

\newtheorem*{remark}{Remark}

\usepackage{tikz}
\usetikzlibrary{shapes.geometric, arrows.meta, positioning, fit, backgrounds, calc, decorations.pathreplacing, shapes.multipart}

\let\origincludegraphics\includegraphics
\renewcommand{\includegraphics}[2][]{%
  \IfFileExists{#2}{%
    \origincludegraphics[#1]{#2}%
  }{%
    \fbox{\parbox{0.8\linewidth}{\centering\small\texttt{[Missing figure: #2]}}}%
  }%
}

\def\BibTeX{{\rm B\kern-.05em{\sc i\kern-.025em b}\kern-.08em
    T\kern-.1667em\lower.7ex\hbox{E}\kern-.125emX}}
\usepackage{balance}

\begin{document}

\title{\textsc{RECTOR}: Priority-Aware Rule-Based Reranking
for Compliance-Aware Autonomous Driving Trajectory Selection}

\author{Hadi Hajieghrary$^{1}$, Benedikt Walter$^{2}$, Chaitanya Shinde$^{3}$, Paul Schmitt$^{4}$, and Miguel Hurtado$^{5}$% <-this % stops a space
\thanks{$^{1}$Hadi Hajieghrary {\tt\small Hadi.Hajieghrary@Torc.ai}, $^{2}$Benedikt Walter {\tt\small Benedikt.Walter@Torc.ai}, $^{3}$Chaitanya Shinde {\tt\small Chaitanya.Shinde@Torc.ai}, and $^{5}$Miguel Hurtado {\tt\small Miguel.Hurtado@Torc.ai} are with TORC Robotics LLC, an independent subsidiary of Daimler Truck AG. $^{4}$Paul Schmitt {\tt\small pauls@massrobotics.org} is with  and Reynolds \& Moore and MassRobotics. 

This paper represents research activities conducted within TORC Robotics LLC. It does not describe any production system, does not constitute the safety case for any Daimler Truck AG product or development program, and has not been reviewed by Daimler Truck AG for regulatory compliance or product liability purposes. The research was conducted using public benchmark data and does not reflect the proprietary safety architecture of any deployed or in-development TORC Robotics or Daimler Truck product.}%
 }

\maketitle

\begin{abstract}
Autonomous driving stacks must pick one trajectory from a multi-modal candidate set; choosing by model confidence ignores safety, traffic-law, and comfort constraints. We present \textsc{RECTOR} (Rule-Enforced Constrained Trajectory Orchestrator), a post-generation reranking layer that scores candidates against a tiered rulebook (Safety~$\succ$~Legal~$\succ$~Road~$\succ$~Comfort) via differentiable proxies and a scene-conditioned applicability mechanism, then selects with a deterministic $\varepsilon$-lexicographic rule that preserves cross-tier priority by construction---without retraining the predictor.

On the Waymo Open Motion Dataset \texttt{validation\_interactive} split (43{,}219 augmented instances, $K{=}6$), under Protocol~B (28-rule proxy catalog, oracle applicability) rule-aware selection cuts Safety+Legal violations from 28.58\% to 20.42\% and Total from 40.32\% to 32.41\% versus confidence-only on the same candidates. A uniform-weight weighted-sum baseline matches binary compliance on this benchmark---the empirical lift comes from rule-aware ranking, while the lexicographic guarantee is the structural differentiator no weight calibration can replicate. Under adversarial confidence corruption, confidence-only selection fails in 100\% of scenarios while both rule-aware selectors reject the injected mode in $\sim$96\%. All figures are proxy-evaluator results (not a safety certificate), open-loop, 5\,s horizon, U.S.\ rules, validation split. Source code: \cite{rector_code}.
\end{abstract}

\begin{IEEEkeywords}
autonomous driving, trajectory selection, rule compliance,
lexicographic ranking, priority-aware reranking,
multi-modal prediction, Waymo Open Motion Dataset,
differentiable proxies, rule applicability prediction
\end{IEEEkeywords}

\section{Introduction}
\label{Sec:Introduction}
\IEEEPARstart{A}utonomous driving stacks must ultimately commit to a single trajectory from a multimodal candidate set. Modern predictors generate several plausible futures, but the final choice is still typically made by confidence or likelihood. That optimizes average geometric accuracy, not behavioral acceptability: a high-confidence mode can still run a red light, violate right-of-way, or tailgate. On WOMD \texttt{validation\_interactive}, best-of-$K$ error is also heavy-tailed: minADE is 0.46\,m at the median but 2.16\,m and 3.62\,m at the 95th and 99th percentiles. In these long-tail interactions, confidence is a weak ranking signal, and explicit constraints become decisive.

This paper studies \emph{open-loop} selection over a fixed candidate set, after generation. We present \textsc{RECTOR} (Rule-Enforced Constrained Trajectory ORchestrator), a post-generation reranking layer that orders candidates under the priority Safety~$\succ$~Legal~$\succ$~Road~$\succ$~Comfort using an $\varepsilon$-lexicographic rule. RECTOR has three pieces: a scene-conditioned applicability mechanism over 28 rules (retained as a tested reference component, not the deployment-default; see Section~\ref{Sec:Accuracy_SectionRule-Compliance_Behavior_and_Efficiency}), 24 differentiable rule proxies for batched evaluation, and a deterministic tolerance-based lexicographic selector. The selector is interchangeable with any predictor that outputs world-frame candidates and confidences.

The central claim is that proxy-evaluator compliance can be improved \emph{at selection time} when the candidate set contains an acceptable option. Generation and selection are distinct responsibilities: the generator covers plausible futures; the reranker picks the least-violating trajectory under explicit priorities, flags infeasibility when none is fully compliant, and avoids collapsing heterogeneous rules into a single penalty. This requires no online constrained optimization or changes to learned scene features. RECTOR does not enforce rules in the control-theoretic or formal-verification sense: the guarantee it provides is \emph{structural priority preservation over normalized proxy tier scores}, conditional on the applicability mask and the reliability of each proxy (see Section~\ref{Sec:Section_VIII_Verification_Invariants_Numerical_Stability_and_Integration_Tests} for proxy--auditor concordance and known false-negative rates, notably on L0.R3). All compliance numbers in this paper are open-loop, 5\,s horizon, on WOMD \texttt{validation\_interactive}; closed-loop reactive evaluation is outside the scope of this study.

\subsection{Related Work and Positioning}
\label{subsec:related_work}

Most prior work improves the \emph{candidate set} rather than the executed mode. Anchor-based predictors (LaneGCN, TNT/DenseTNT), Transformer set predictors (MTR, Wayformer, QCNet, GameFormer), and latent-variable models (Trajectron++, MotionLM) target multimodal coverage~\cite{liang2020lanegcn,zhao2021tnt,gu2021densetnt,salzmann2020trajectron,shi2022motion,nayakanti2023wayformer,zhou2023qcnet,huang2023gameformer,seff2023motionlm}. Constraint-guided generators (CTG, CTG++, MotionDiffuser) bias generation toward compliant regions~\cite{zhong2023ctg,zhong2023ctgpp,jiang2023motiondiffuser}. These methods strengthen the proposal distribution but do not impose a priority hierarchy on the single trajectory that is executed.

Constraints are also enforced earlier in the stack. Planning- and control-time methods such as MPC, control barrier functions, reachability analysis, and rulebook-based control embed constraints in online optimization~\cite{borrelli2005mpc,ames2017control,althoff2010reachability}. Policy-time methods such as constrained RL and SafetyNet internalize safety during learning~\cite{altman1999constrained,achiam2017constrained,vitelli2022safetynet}. Both target different intervention points: RECTOR addresses post-generation reranking while leaving the predictor unchanged.

The closest comparison is scalar reranking. Weighted penalties and differentiable optimization layers (OptNet, differentiable MPC) rank pre-generated candidates, but cross-tier behavior depends on weight calibration~\cite{marler2004survey,amos2017optnet,amos2018differentiable}. RSS and liability rulebooks motivate explicit priority structure~\cite{shalev2017formal,censi2019liability} but do not address learned scene-conditioned applicability over motion candidates. RECTOR targets post-generation selection on a fixed candidate set, predicts applicability from scene context, scores with smooth rule proxies, and enforces priority structurally rather than through tuned weights. Consistent with that focus, we evaluate selected-trajectory accuracy and rule compliance, not only candidate-set coverage.

\subsection{Contributions}
\label{subsec:contributions}
%------------------------------------------------------------

In the reference instantiation, a Transformer--CVAE generator inspired by M2I produces $K{=}6$ candidates over a 5\,s horizon ($T{=}50$ at 10\,Hz), evaluated against $R{=}28$ rules in four tiers (Safety: 5, Legal: 7, Road: 2, Comfort: 14); 24 are scored by differentiable proxies and 4 are audit-only. Candidates are ranked lexicographically, and confidence breaks ties that are compliance-equivalent; the selector is deterministic, parameter-free, and leaves the generator unchanged. The contributions are:

\begin{enumerate}[label=\textbf{C\arabic*},leftmargin=2.5em,itemsep=2pt]

\item \textbf{Rule-aware selection measurably improves proxy compliance on a fixed candidate set.} Under Protocol~B (28 rules, oracle applicability), rule-aware reranking reduces Safety+Legal violations from 28.58\% to 20.42\% (8.16~pp) and Total from 40.32\% to 32.41\% (7.91~pp) versus confidence-only on 43{,}219 instances. The deployed Protocol~A (24 rules, learned head) shows larger apparent gains, but its inflated baseline reflects Safety over-activation and Legal/Road suppression by the head; we therefore lead with Protocol~B. A uniform-weight weighted-sum baseline matches the lexicographic binary compliance on this benchmark, so the empirical lift is rule-aware over confidence-only.

\item \textbf{$\varepsilon$-Lexicographic selection with a weight-free cross-tier guarantee.} The selector preserves Safety~$\succ$~Legal~$\succ$~Road~$\succ$~Comfort over the normalized tier scores within tolerance $\varepsilon$; no scalar reweighting can replicate this. The guarantee is structural and proxy-conditional, not a control-theoretic or jurisdictional safety certificate. Lexicographic and weighted-sum match here only because cross-tier conflicts are rare in WOMD (\emph{does not}, not \emph{cannot}; Sec.~\ref{Sec:Accuracy_SectionRule-Compliance_Behavior_and_Efficiency}). The selector also produces a regression-testable decision trace.

\item \textbf{Structural robustness to upstream confidence miscalibration.} Under an adversarial protocol that injects a Safety-violating candidate with the maximum confidence, confidence-only selection fails in 100\% of scenarios; both rule-aware selectors reject the injection in $\sim$96\% across three injection families---structural robustness no scalar recalibration can match.

\item \textbf{Differentiable proxy framework with explicit reliability bounds.} Pairwise concordance with the full Waymo evaluator is \textbf{0.875} on 10 variance-bearing rules (0.948 over all 24). Proxies run in $\sim$1\,ms. The collision-avoidance proxy L0.R3 attains 0.80 concordance with 80.7\% FNR at 0.0\% FPR; compliance figures involving L0.R3 are proxy-evaluator rates, and a post-selection full-evaluator audit is recommended for any deployment-facing reading (Sec .~\ref {Sec:Section_VIII_Verification_Invariants_Numerical_Stability_and_Integration_Tests}).

\item \textbf{Always-on applicability as a design recommendation (negative result on learning applicability).} A parameter-free always-on policy for Safety/Legal Pareto-dominates the learned 3.33\, M-parameter head on cross-evaluated compliance (S+L 20.53\% vs.\ 20.69\%) \emph{and} on accuracy (selADE 2.481\,m vs.\ 2.737\,m); we therefore recommend always-on Safety/Legal as the default and present the learned head as an ablation.

\item \textbf{Verification and practical integration.} Formal properties (order invariance, infeasibility fallback, scalarization correctness; Propositions~II.2--II.3), 99.2--100\% invariant pass rates, 0.0\% post-mitigation NaN/Inf, and at most 6.1~pp / 12.1~pp violation increase under perception/map perturbations. Mean inference latency is 7.3\,ms per sample (137~samples/s); rule evaluation and selection add only 1.4\,ms (19\%). Rule- and tier-level decision traces support audit, and a world-frame output contract keeps the RECTOR plug-in compatible.

\end{enumerate}

\begin{figure}[t]
\centering
\includegraphics[width=1.0\columnwidth]{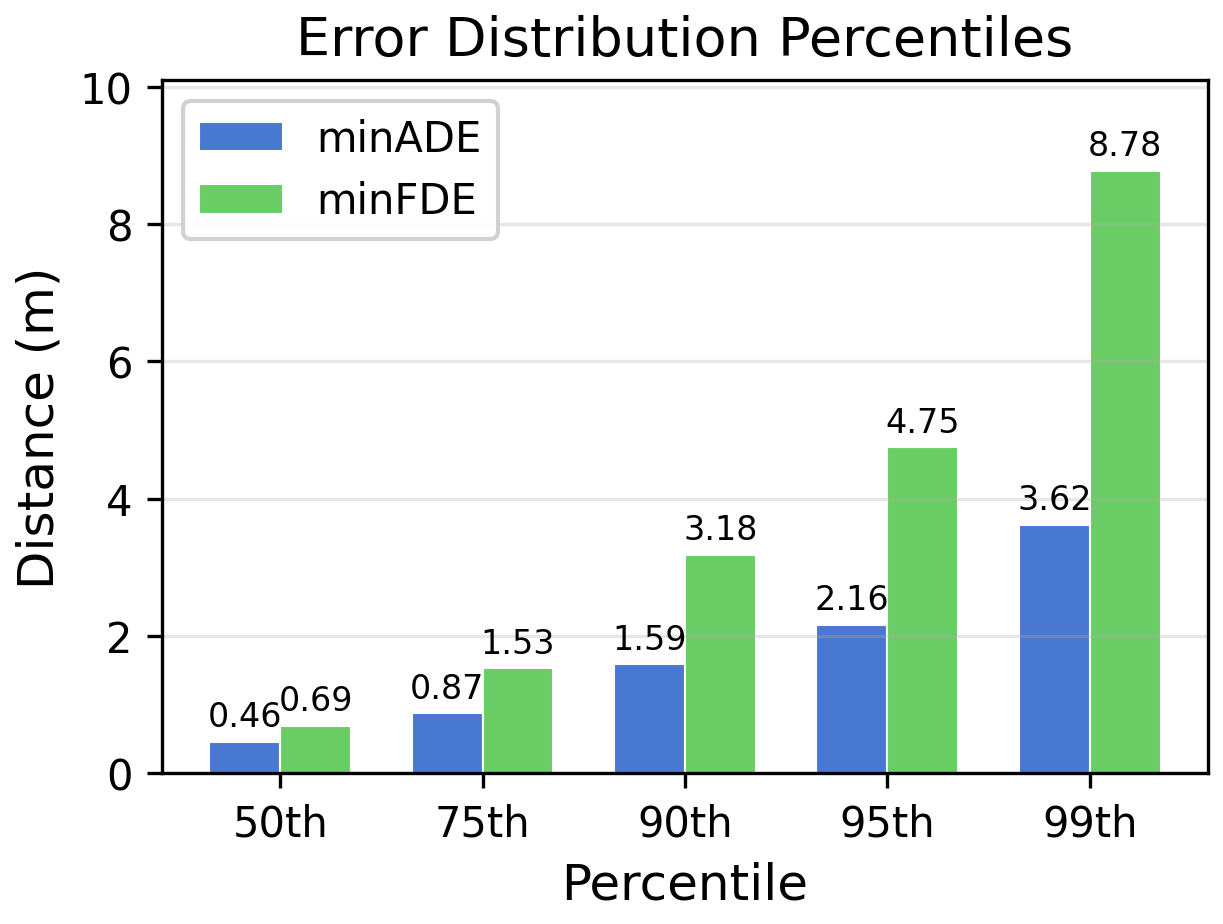}
\caption{Percentiles of best-of-$K$ geometric error on WOMD \texttt{validation\_interactive} with $K{=}6$. Blue bars show minADE and green bars show minFDE. The plotted minADE percentiles are 0.46\,m (50th), 0.87\,m (75th), 1.53\,m (90th), 2.16\,m (95th), and 3.62\,m (99th), illustrating the heavy tail that motivates rule-aware selection.}
\label{fig:percentile_analysis}
\end{figure}

\textsc{RECTOR} reranks one trajectory from $K{=}6$ candidates over a 5\,s open-loop horizon. Compliance gains come solely from reranking a fixed candidate set, and the open-loop, 5\,s, validation-split setting precludes any closed-loop or deployment-grade claim. Model selection is also performed on the validation set, so absolute generation and Protocol~A numbers carry an unquantified optimistic bias; Protocol~B selector comparisons use oracle applicability and are immune. Held-out test reporting under both protocols is a deliberate next step (Sec.~\ref{Sec:Conclusion}). Results use U.S.\ right-hand traffic conventions; other jurisdictions require recalibrating the rule catalog. Route-level and longer-horizon planning are out of scope. Figure~\ref{fig:rector_architecture} gives a system overview; Sec.~\ref{Sec:RECTOR} details the architecture.

\begin{figure*}[t]
\centering
\resizebox{\textwidth}{!}{%
\begin{tikzpicture}[
% ── Node styles ──────────────────────────────────────────────
input/.style={
rectangle, draw=gray!70, fill=gray!10,
minimum width=2.0cm, minimum height=0.7cm,
font=\footnotesize, align=center},
encoder/.style={
rectangle, rounded corners=3pt,
draw=blue!70!black, fill=blue!15,
minimum width=2.2cm, minimum height=1.8cm,
font=\small\bfseries, align=center},
decoder/.style={
rectangle, rounded corners=3pt,
draw=green!60!black, fill=green!15,
minimum width=2.2cm, minimum height=1.8cm,
font=\small\bfseries, align=center},
rulehead/.style={
rectangle, rounded corners=3pt,
draw=orange!80!black, fill=orange!15,
minimum width=2.0cm, minimum height=1.3cm,
font=\small\bfseries, align=center},
proxy/.style={
rectangle, rounded corners=3pt,
draw=red!70!black, fill=red!10,
minimum width=2.0cm, minimum height=1.3cm,
font=\small\bfseries, align=center},
scorer/.style={
rectangle, rounded corners=3pt,
draw=purple!70!black, fill=purple!15,
minimum width=2.0cm, minimum height=1.3cm,
font=\small\bfseries, align=center},
output/.style={
rectangle, draw=gray!70, fill=yellow!10,
minimum width=2.0cm, minimum height=0.7cm,
font=\footnotesize, align=center},
data/.style={
rectangle, draw=gray!50, fill=white,
minimum width=1.6cm, minimum height=0.55cm,
font=\scriptsize, align=center},
arrow/.style={->, >=Stealth, thick, gray!70},
dataarrow/.style={->, >=Stealth, semithick, gray!50},
lbl/.style={font=\scriptsize\itshape, text=gray!70}
]

% ═══════════════════════════════════════════════════════════
% ROW 1 — Generation pipeline
% ═══════════════════════════════════════════════════════════
\node[input] (ego)    at (0,  0) {Ego History\\$[\text{B}, 11, 4]$};
\node[input] (agents) at (0, -1) {Agent States\\$[\text{B}, 32, 11, 4]$};
\node[input] (lanes)  at (0, -2) {Lane Centers\\$[\text{B}, 64, 20, 2]$};

\begin{scope}[on background layer]
\node[draw=gray!40, dashed, rounded corners=5pt,
fit=(ego)(agents)(lanes), inner sep=5pt,
label={[font=\footnotesize\bfseries, text=gray!60]above:Inputs}] {};
\end{scope}

\node[encoder, right=0.7cm of agents] (m2i)
{M2I-style\\Scene Encoder\\[2pt]{\footnotesize (PointNet + Transformer)}};

\node[lbl, below=0.1cm of m2i] {\textbf{jointly trained}};

\node[data, below=0.7cm of m2i] (scene_emb)
{Scene Emb.\\$[\text{B}, 256]$};

\node[decoder, right=0.7cm of scene_emb] (cvae)
{CVAE\\Decoder};

\node[output, right=1.85cm of cvae, yshift=+0.4cm] (conf)
{Confidences\\$[\text{B}, 6]$};

\node[output, right=1.85cm of cvae, yshift=-0.4cm] (traj)
{Ego Trajectories\\$[\text{B}, 6, 50, 4]$};

\draw[arrow] (ego.east)    --++(0.25,0) |- ([yshift=+0.5cm]m2i.west);
\draw[arrow] (agents.east) -- (m2i.west);
\draw[arrow] (lanes.east)  --++(0.25,0) |- ([yshift=-0.5cm]m2i.west);
\draw[arrow] (m2i.south)      -- (scene_emb.north);
\draw[arrow] (scene_emb.east) -- (cvae.west);
\draw[arrow] (cvae.east) --++(0.25,0) |- (traj.west);
\draw[arrow] (cvae.east) --++(0.25,0) |- (conf.west);

% ═══════════════════════════════════════════════════════════
% ROW 2 — Rule-evaluation pipeline
% ═══════════════════════════════════════════════════════════
\node[rulehead, below=1.2cm of scene_emb] (apphead)
{Rule\\Applicability\\Head};

\node[data, right=0.5cm of apphead] (app_out)
{Applicability\\$\mathbf{a}\in[0,1]^{28}$};

\node[rectangle, draw=orange!60!black, fill=orange!5,
minimum width=1.5cm, minimum height=0.55cm,
font=\scriptsize, align=center,
right=0.5cm of app_out] (thresh)
{Threshold\\$\hat{\mathbf{a}}\in\{0,1\}^{28}$};

\node[proxy, right=0.5cm of thresh] (proxies)
{Smooth\\Rule Proxies};

\node[data, right=0.8cm of proxies] (viol)
{Violations\\$[\text{B}, 6, 24]^{*}$};

\node[scorer, right=0.5cm of viol] (scorer)
{Tiered\\Lexicographic\\Scorer};

\node[output, right=0.6cm of scorer, yshift=+0.4cm] (selected)
{Selected Ego\\Trajectory};

\node[output, right=0.6cm of scorer, yshift=-0.4cm] (explain)
{Rule-based\\Explanation};

\draw[arrow] (scene_emb.south) -- (apphead.north);
\draw[arrow] (apphead.east)    -- (app_out.west);
\draw[arrow] (app_out.east)    -- (thresh.west);
\draw[arrow] (thresh.east)     -- (proxies.west);
\draw[arrow] (proxies.east)    -- (viol.west);
\draw[arrow] (viol.east)       -- (scorer.west);
\draw[arrow] (scorer.east) --++(0.25,0) |- (selected.west);
\draw[arrow] (scorer.east) --++(0.25,0) |- (explain.west);

% ── Cross-row connections ────────────────────────────────
\draw[arrow] (traj.south) --++(0,-0.55)
-| ([xshift=-0.2cm]proxies.north);

\draw[dataarrow, dashed]
(lanes.south) --++(0,-1.7)
-| ([xshift=-0.6cm]proxies.north);

\node[lbl] at ([xshift=1.4cm, yshift=-1.4cm]lanes.south)
{\textbf{map context}};

\draw[arrow] (conf.east) -| (scorer.north);

\node[font=\scriptsize, text=purple!70!black, align=left,
anchor=north west]
at ([xshift=-0.75cm, yshift=-0.25cm]scorer.south west)
{Safety $\succ$ Legal $\succ$ Road $\succ$ Comfort};

\end{tikzpicture}%
}
\caption{\textbf{\textsc{RECTOR} system overview.}
Scene inputs flow through an M2I-style encoder to produce the scene embedding
$\mathbf{e}_{\text{scene}}\in\mathbb{R}^{256}$.
The CVAE decoder generates six candidate ego trajectories, each with an associated confidence score.
The rule applicability head $(3.33\,\mathrm{M}$ of $8.83\,\mathrm{M}$ total parameters)
predicts continuous applicability scores $\mathbf{a}\in[0,1]^{28}$,
thresholded to binary masks $\hat{\mathbf{a}}\in\{0,1\}^{28}$.
Smooth proxies score each candidate on $24$ of $28$ rules, with $4$ remaining audit-only; the tiered lexicographic scorer then selects the trajectory with the smallest tier-score vector
respecting Safety~$\succ$~Legal~$\succ$~Road~$\succ$~Comfort, with confidence entering only as
a tiebreaker among compliance-equivalent candidates.}
\label{fig:rector_architecture}
\end{figure*}

\section{Problem Formulation}
\label{Sec:Problem_Formulation}
\begin{table}
\centering
\small
\caption{Key Notation and Symbol Definitions}
\label{tab:notation}
\begin{tabularx}{\columnwidth}{@{}l>{\raggedright\arraybackslash}X@{}}
\toprule
\multicolumn{2}{l}{\textit{Scenarios and Trajectories}} \\
$s$ & Driving scenario sampled from $\mathcal{D}$ \\
$\mathbf{x}_{\text{ego}},\;\mathbf{X}_{\text{agents}},\;\mathcal{M}$ & Ego history, neighbor histories, map context \\
$\tau,\;\tau_t$ & Ego future trajectory; state at timestep $t$ \\
$\mathcal{T} = \{\tau^{(1)},\ldots,\tau^{(K)}\}$ & Candidate set ($K{=}6$) \\
$\tau^{\mathrm{gt}}$ & Ground truth future trajectory \\
$p^{(k)}$ & Confidence score for candidate $k$ \\
$T_h,\;T$ & History (11 steps, 1.1\,s); horizon (50 steps, 5.0\,s) \\
\midrule
\multicolumn{2}{l}{\textit{Rules and Compliance}} \\
$\mathcal{R},\;R{=}28$ & Rule catalog and its cardinality \\
$\ell(r) \in \{0,1,2,3\}$ & Tier assignment for rule $r$ \\
$a_r(s),\;\hat{a}_r(s)$ & Oracle / predicted applicability indicator \\
$V_r(\tau;s) \ge 0$ & Raw violation severity \\
$\bar{V}_r(\tau;s) \in [0,1]$ & Normalized severity: $c_r(V_r) \in [0,1]$; see \eqref{eq:normalization_contract} \\
\midrule
\multicolumn{2}{l}{\textit{Tier Scores and Selection}} \\
$\tilde{w}_r$ & Intra-tier weight; $\sum_{r:\ell(r)=\ell}\tilde{w}_r = 1$ \\
$S_\ell(\tau;s) \in [0,1]$ & Aggregated tier-$\ell$ score \\
$\mathbf{S}(\tau;s)$ & Tier score vector $[S_0, S_1, S_2, S_3]$ \\
$\varepsilon_\ell$ & Tier tolerance for near-tie resolution \\
$L{=}4$ & Number of tiers \\
\midrule
\multicolumn{2}{l}{\textit{Metrics}} \\
$\text{minADE},\;\text{minFDE}$ & Best-of-$K$ displacement errors \\
$\text{selADE},\;\text{selFDE}$ & Selected-trajectory displacement errors \\
$\text{MissRate}_\delta$ & Fraction with $\text{minFDE} > \delta$ \\
\bottomrule
\end{tabularx}
\end{table}

We formulate trajectory selection as a hierarchical decision problem over a fixed set of candidates. The generator provides diverse futures; the selector selects the trajectory most acceptable according to a prioritized rulebook. We fix the notation, motivate the rule-based objective by showing that geometric metrics are insufficient, and formalize the lexicographic criterion along with its correctness properties. Table~\ref{tab:notation} summarizes notation used throughout the paper.
 
%-------------------------------------------------------------------------------
\subsection{Trajectory Prediction Setting}
\label{subsec:prediction_setting}
%-------------------------------------------------------------------------------
 
Let $s \sim \mathcal{D}$ denote a driving scenario. A multi-modal predictor $f_\theta$ maps it to a candidate set with confidences:
\begin{align}
(\mathcal{T}, \mathbf{p}) &= f_\theta(s), \\
\mathcal{T} &= \{\tau^{(1)},\ldots,\tau^{(K)}\},\quad
\mathbf{p} = (p^{(1)}, \ldots, p^{(K)}).\nonumber
\end{align}
The goal is to select $\hat{\tau} \in \mathcal{T}$ that is preferable under a prioritized rulebook. Standard geometric metrics---ADE, FDE, minADE, minFDE, MissRate ($\delta{=}2.0$\,m)---characterize candidate-set \emph{coverage} but not the \emph{acceptability} of the selected mode.
 
Best-of-$K$ minADE on WOMD \texttt{validation\_interactive} is strongly heavy-tailed: 0.46, 1.53, 2.16, and 3.62\,m at the 50th, 90th, 95th, and 99th percentiles---nearly $8\times$ from median to 99th (Fig.~\ref{fig:percentile_analysis}). In the tail, confidence-only selection is brittle because mode probabilities are poorly calibrated under rare interactions, while rule constraints (collision margins, signal compliance) remain decisive. This motivates selection grounded in traffic rules rather than likelihood alone.
 
%-------------------------------------------------------------------------------
\subsection{Traffic Rule Taxonomy}
\label{subsec:rule_taxonomy}
%-------------------------------------------------------------------------------
 
Let $\mathcal{R}$ denote a catalog of $R{=}28$ traffic rules, each assigned to one of four tiers:
\begin{equation}
\text{Safety} \succ \text{Legal} \succ \text{Road} \succ \text{Comfort}.
\end{equation}
Two applicability indicators are used throughout:
\begin{itemize}[leftmargin=1.5em]
\item \textbf{Oracle} $a_r(s) \in \{0,1\}$: ground-truth label from map/scenario metadata, used \emph{only} for post-hoc auditing (Protocol~B).
\item \textbf{Predicted} $\hat{a}_r(s) \in \{0,1\}$: thresholded output of the learned applicability head, used in the selection objective.
\end{itemize}
The gap between $a_r$ and $\hat{a}_r$ is a key source of approximation error. Each applicable rule produces a non-negative severity $V_r(\tau;s) \ge 0$. Because rules have heterogeneous units, we normalize to a common $[0,1]$ scale. The canonical instantiation (used in all experiments) is the exponential cost map from Section~\ref{Sec:Rule_Evaluation_And_Data_Pipeline}:
\begin{align}
\label{eq:normalization_contract}
\bar{V}_r(\tau;s) &= c_r(V_r(\tau;s)) \\
&= 1 - \exp\!\bigl(-\kappa_r \max(0, V_r(\tau;s))\bigr) \;\in\; [0,1],\nonumber
\end{align}
where $\kappa_r > 0$ is a per-rule softness parameter (tabulated in Table~\ref{tab:proxy_constants}). The linear clamp $\mathrm{clamp}(V_r/\alpha_r,0,1)$ satisfies the same $[0,1]$ contract and may be substituted for $c_r$ without affecting any downstream proposition; Propositions~\ref{thm:order_invariance}--\ref{thm:scalarization} require only $\bar{V}_r \in [0,1]$, which both normalizations guarantee.

Tier scores aggregate normalized violations using predicted applicability:
\begin{equation}
\label{eq:tier_score}
S_\ell(\tau;s) = \sum_{r:\,\ell(r)=\ell} \tilde{w}_r\,\hat{a}_r(s)\,\bar{V}_r(\tau;s),
\end{equation}
with $\tilde{w}_r \ge 0$ and $\sum_{r:\ell(r)=\ell}\tilde{w}_r = 1$ within each tier (default: uniform).  This normalization ensures $S_\ell \in [0,1]$ regardless of the number of rules per tier---a property required by Proposition~\ref{thm:scalarization}.  We collect the four scores into $\mathbf{S}(\tau;s) = [S_0, S_1, S_2, S_3]$.
 
\begin{definition}[Tolerance-Based Lexicographic Selection]
\phantomsection
\label{def:lex_selection}
Given candidates $\mathcal{T}$, tier score vectors $\mathbf{S}^{(k)}$, confidences $p^{(k)}$, and tolerances $(\varepsilon_0,\ldots,\varepsilon_3)$:
\begin{enumerate}
\item Initialize $\mathcal{C}_0 = \{1,\ldots,K\}$.
\item For each tier $\ell = 0,1,2,3$: compute $S_\ell^* = \min_{k \in \mathcal{C}_\ell} S_\ell^{(k)}$ and retain $\mathcal{C}_{\ell+1} = \{k \in \mathcal{C}_\ell : S_\ell^{(k)} \le S_\ell^* + \varepsilon_\ell\}$.
\item If $|\mathcal{C}_4| > 1$, break ties by selecting $k^* = \arg\max_{k \in \mathcal{C}_4} p^{(k)}$ (highest confidence among tier survivors).
\item If confidence is also exactly tied, return the candidate with the lowest index.
\end{enumerate}
\end{definition}

\noindent\textbf{Note on confidence usage.} Confidence $p^{(k)}$ does not enter compliance-based tier filtering (steps~1--2). It is used only in step~3, among candidates that are compliance-equivalent within tolerance. When all tier scores are zero---a majority case on WOMD---the lexicographic selector falls through to the confidence tiebreaker, whereas weighted-sum resolves tied scalar scores by its $\arg\min$ convention. These compliance-equivalent cases account for the selADE gap between the two selectors in Section~\ref{Sec:Accuracy_SectionRule-Compliance_Behavior_and_Efficiency}.
 
The definition is deliberately \emph{procedural}: the tolerance relation is not transitive, so pruning depends on the current pool's global minimum rather than pairwise comparisons. Operationally, the procedure is deterministic and respects tier priority.
 
%-------------------------------------------------------------------------------
\subsection{Formal Guarantees}
\label{subsec:formal_guarantees}
%-------------------------------------------------------------------------------
 
Three properties establish the selector's correctness: \textit{Order Invariance}, \textit{Infeasibility Fallback}, and \textit{Scalarization Correctness}:
 
\begin{proposition}[Order Invariance]
\label{thm:order_invariance}
The selector's output is invariant to candidate enumeration order, except in the degenerate case where two or more survivors share both identical tier scores and identical confidence; in that case, the lowest-index fallback (Def.~\ref{def:lex_selection}, step~4) is deterministic given the enumeration.
\end{proposition}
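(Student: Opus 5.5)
The plan is to formalize ``enumeration order'' as a permutation $\pi$ of $\{1,\ldots,K\}$ and to show that the selector, run on the relabeled candidates $\tau'^{(k)} = \tau^{(\pi(k))}$, returns the same trajectory. The idea is to track the survivor pools $\mathcal{C}_\ell$ of Definition~\ref{def:lex_selection} as \emph{sets of trajectories}, identified through their data $(\mathbf{S}^{(k)}, p^{(k)})$, rather than as sets of indices. I would first note that each tier score $S_\ell^{(k)}$ in \eqref{eq:tier_score} depends only on $\tau^{(k)}$ and the scene $s$, through $\hat{a}_r(s)$ and $\bar{V}_r(\tau^{(k)};s)$. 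It therefore does not depend on the index $k$, and relabeling simply permutes the multiset of score vectors and confidences.

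The core step is an induction on $\ell = 0,\ldots,4$. The claim is that the relabeled pool satisfies $\mathcal{C}'_\ell = \pi^{-1}(\mathcal{C}_\ell)$. The base case is immediate, since both pools are $\{1,\ldots,K\}$. For the inductive step, $S_\ell^*$ is a minimum over a set, so it is invariant under bijective relabeling of that set. The retention test $S_\ell^{(k)} \le S_\ell^* + \varepsilon_\ell$ is a pointwise predicate on each candidate's own score against this pool-global threshold, so membership is preserved under $\pi$. I would stress here the remark after the definition: pruning uses the pool minimum rather than pairwise sequential comparisons. The non-transitivity of the tolerance relation therefore never introduces a dependence on the order in which candidates are visited.

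For step~3, I would apply the same argument to $\arg\max_{k\in\mathcal{C}_4} p^{(k)}$. The maximizing \emph{value} is order-invariant. If the maximizer is unique, it is the same trajectory under any $\pi$. If it is not unique, let $M$ be the set of maximizers. Whenever every element of $M$ is the same trajectory (identical $\tau$), or at least all elements carry identical tier scores and confidence, the output is deterministic but may differ by index. That is exactly the excluded degenerate case, in which step~4 returns the lowest index under the given enumeration. One subtlety remains. Two maximizers could share the same confidence while having \emph{different} tier scores, since both survived within tolerance. The statement's exception mentions identical tier scores, so I would need to either strengthen the exception or argue that it is covered.

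I expect this last point to be the main obstacle. The lowest-index fallback in step~4 fires whenever confidences tie among survivors, even if their tier scores differ within $\varepsilon_\ell$. The proof must therefore read the degenerate case as ``survivors indistinguishable by the selector's criteria'' and enlarge the exception to include confidence ties among $\varepsilon$-equivalent survivors. Alternatively, it could assume $\varepsilon_\ell = 0$ for the equality claim. My first attempt would be the honest one: state the invariance as holding whenever $\arg\max_{k\in\mathcal{C}_4} p^{(k)}$ is unique, and note that uniqueness fails only under exact confidence ties, which the proposition collectively labels degenerate. In that case, determinism given the enumeration follows trivially, because step~4 is a fixed function of the indices.
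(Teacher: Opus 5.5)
Your argument is the paper's. Each survivor pool $\mathcal{C}_{\ell+1}$ is a function of the multiset of tier scores in $\mathcal{C}_\ell$, induction gives an order-invariant $\mathcal{C}_4$, and the confidence $\arg\max$ is a set operation, so order dependence can enter only through step~4. The subtlety you flag is genuine: the paper's proof simply asserts that step~4 fires only when survivors share identical tier scores \emph{and} identical confidence, which does not follow from Definition~\ref{def:lex_selection} when $\varepsilon_\ell>0$, and it handles this only implicitly through its remark that exact confidence ties have probability zero for softmax over distinct logits; your corrected reading, invariance whenever $\arg\max_{k\in\mathcal{C}_4}p^{(k)}$ is unique (with the stated exception exact when $\varepsilon_\ell=0$), is therefore the right one.
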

\begin{proof}
At each stage $\ell$, $C_{\ell+1}{=}\{k\in C_\ell : S_\ell^{(k)}\le\min_{j\in C_\ell}S_\ell^{(j)}+\varepsilon_\ell\}$ is a function of the multiset of tier scores in $C_\ell$, hence order-invariant; induction over $\ell{=}0,\ldots,3$ gives an order-invariant $C_4$. The confidence tiebreak $\arg\max_{k\in C_4} p^{(k)}$ is likewise a set operation. Order dependence enters only through step~4, which fires only when survivors share identical tier scores \emph{and} identical confidence---probability zero for softmax confidences over distinct logits at generic floating-point precision.
\end{proof}
 
\begin{remark}[Infeasibility Fallback]
\label{thm:infeasibility}
If all candidates have non-zero Safety-tier scores, the selector returns the least-violating candidate under the full lexicographic procedure: Tier~0 retains the minimum-Safety candidates within tolerance; lower tiers refine that survivor set. When the selected candidate has $S_0 > 0$, the system emits an explicit \texttt{infeasible} flag for downstream fail-safe logic. This flag reflects the proxy-evaluated Safety tier and is not, by itself, a formal collision certificate under the full non-differentiable evaluator; post-selection full-evaluator auditing is therefore recommended for deployment safety monitoring. On the WOMD validation benchmark at $K{=}6$, 96.8\% of scenarios contain at least one candidate with $S_0 = 0$; the infeasibility rate is approximately 3.2\%. Candidate policies for infeasible scenarios include conservative deceleration, a planner replan request, or an emergency-brake fallback; selection among these is a deployment-level decision outside the scope of this paper.
\end{remark}
 
\begin{proposition}[Scalarization Correctness]
\label{thm:scalarization}
For any resolution parameter $\varepsilon > 0$, the scalar score $\mathrm{Score}^{(k)} = \sum_{\ell=0}^{3} B^{4-\ell} S_\ell^{(k)}$ with base $B \ge \lceil 1/\varepsilon \rceil + 1$ preserves lexicographic ordering for every pair $(\tau^{(a)}, \tau^{(b)})$ whose first differing tier exhibits a score gap $\delta \ge \varepsilon$.  In particular, with $\varepsilon = 10^{-3}$ (the operating tier tolerance) and $B = 1001$, the scalarization is order-preserving on all pairs separated by at least one tolerance unit at the highest disagreeing tier.
\end{proposition}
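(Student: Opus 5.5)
The plan is a direct telescoping (geometric-series) bound. It uses only the normalization contract $\bar V_r\in[0,1]$ and the intra-tier weight normalization in \eqref{eq:tier_score}, which together give $S_\ell^{(k)}\in[0,1]$ for every tier and candidate. Fix a pair $(\tau^{(a)},\tau^{(b)})$ and let $m\in\{0,1,2,3\}$ be the first tier at which their scores differ. I read ``first differing tier'' literally: $S_\ell^{(a)}=S_\ell^{(b)}$ for all $\ell<m$. Without loss of generality take $S_m^{(b)}-S_m^{(a)}=\delta\ge\varepsilon$, so that $\tau^{(a)}$ is lexicographically preferred. The goal is to show $\mathrm{Score}^{(b)}-\mathrm{Score}^{(a)}>0$.

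\textbf{Step 1 (decomposition).} The tiers $\ell<m$ cancel exactly, which leaves
\begin{equation*}
\mathrm{Score}^{(b)}-\mathrm{Score}^{(a)} = B^{4-m}\delta + \sum_{\ell=m+1}^{3} B^{4-\ell}\bigl(S_\ell^{(b)}-S_\ell^{(a)}\bigr).
\end{equation*}

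\textbf{Step 2 (worst case for lower tiers).} Since $S_\ell\in[0,1]$, each lower-tier difference is at least $-1$. Writing $n=4-m$, the lower-tier sum is therefore bounded below by $-\sum_{i=1}^{n-1}B^i=-(B^n-B)/(B-1)$. The exponents run from $1$ up to $n-1$ because the last tier $\ell=3$ carries weight $B^1$.

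\textbf{Step 3 (role of the base).} The hypothesis $B\ge\lceil 1/\varepsilon\rceil+1$ gives $B-1\ge 1/\varepsilon$. Hence
\begin{equation*}
\frac{B^n-B}{B-1}\le \varepsilon\,(B^n-B)<\varepsilon B^n\le \delta B^n .
\end{equation*}
Combining this with Steps 1 and 2 yields $\mathrm{Score}^{(b)}-\mathrm{Score}^{(a)}\ge \delta B^n-(B^n-B)/(B-1)>0$. The inequality is strict, and it holds even when $m=3$, where the lower-tier sum is empty. For the instantiation stated in Proposition~\ref{thm:scalarization}, $\varepsilon=10^{-3}$ gives $\lceil 1/\varepsilon\rceil+1=1001$, so $B=1001$ satisfies the hypothesis.

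The main point of care is not the algebra but the scope of the statement. I expect the obstacle to be making precise that ``preserves lexicographic ordering'' refers to the strict lexicographic order with exact equality on the higher tiers. It is not a claim about the tolerance-based procedure of Def.~\ref{def:lex_selection}, whose $\varepsilon$-relation is non-transitive. In particular, if two candidates differ at a higher tier by less than $\varepsilon$, a large lower-tier gap could reverse the scalar order. Those pairs are exactly the ones the proposition excludes. I would state this explicitly in a closing remark.

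I would also add a short note on floating-point resolution. $B^4\approx10^{12}$ combined with differences of order $\varepsilon B$ stays well within double-precision range, so the strict inequality survives numerically. This note is not essential to the proof.
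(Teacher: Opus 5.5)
Your proposal is correct and follows the paper's proof essentially step for step. You cancel the agreeing higher tiers, bound the lower-tier perturbation via $S_\ell\in[0,1]$ by the geometric sum $\sum_{\ell>m}B^{4-\ell}=(B^{4-m}-B)/(B-1)$, and use $B-1\ge 1/\varepsilon$ so that a gap $\delta\ge\varepsilon$ at the first differing tier dominates. Your explicit requirement of exact equality on the tiers above $m$ is, if anything, slightly more careful than the paper's definition of $\ell^*$, which is merely the least index with $S_{\ell^*}^{(a)}<S_{\ell^*}^{(b)}$ and does not rule out a higher tier favoring $\tau^{(b)}$.
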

\begin{proof}
Let $\ell^*$ be the least index with $S_{\ell^*}^{(a)}{<}S_{\ell^*}^{(b)}$, $\delta{=}S_{\ell^*}^{(b)}{-}S_{\ell^*}^{(a)}{>}0$. Using $S_\ell^{(k)}\in[0,1]$, the lower-tier perturbation is bounded by $\sum_{\ell>\ell^*}B^{4-\ell}<B^{4-\ell^*}/(B{-}1)$, so $\delta\ge 1/(B{-}1)$ suffices for $\mathrm{Score}^{(b)}{>}\mathrm{Score}^{(a)}$. Choosing $B\ge\lceil 1/\varepsilon\rceil{+}1$ gives $1/(B{-}1)\le\varepsilon$, so any $\delta\ge\varepsilon$ preserves order. With $\varepsilon{=}10^{-3}$, $B{=}1001$ gives threshold $10^{-3}$. Ties are resolved by the confidence tiebreaker (Def.~\ref{def:lex_selection}, step~3).
\end{proof}
 
These propositions are the selector-side basis for the paper's structural claim. Lexicographic ranking \emph{\textbf{cannot}} trade a higher-priority violation for a lower-priority gain, because tier filtering is applied \emph{\textbf{before}} lower tiers are consulted; no reparameterization changes this. A weighted-sum scalarization can always be reparameterized to allow such trade-offs, even if its current weights avoid them on a specific benchmark.
 
The formal problem statement is:

\begin{problem}[Rule-Aware Trajectory Selection]
\label{prob:selection}
The system is specified by
$\langle f_\theta,\; \mathcal{R},\; \hat{a},\; \bar{V},\; \mathsf{Select}\rangle$,
where $f_\theta$ generates candidates, $\mathcal{R}$ is the tiered rule catalog, $\hat{a}$ is the applicability policy, $\bar{V}$ is the normalized evaluation layer, and $\mathsf{Select}$ is the lexicographic selector.  The objective is:
\begin{equation}
\hat{\tau} = \mathsf{Select}\!\big(\mathcal{T},\;\mathbf{S}(\cdot\,;s),\;\mathbf{p},\;\boldsymbol{\varepsilon}\big).
\end{equation}
\end{problem}
 
This tuple separates generation from selection and exposes the two sources of approximation: applicability prediction and proxy evaluation. The architecture appears in Section~\ref{Sec:RECTOR}; empirical consequences appear in Sections~\ref{Sec:Experiments} and~\ref{Sec:Accuracy_SectionRule-Compliance_Behavior_and_Efficiency}.

\section{RECTOR Architecture}
\label{Sec:RECTOR}
 
\begin{table}[t]
\centering
\small
\caption{Parameter Breakdown (CVAE Generator + RECTOR Selection Layer)}
\label{tab:param_breakdown}
\begin{tabular}{lrr}
\toprule
\textbf{Component} & \textbf{Parameters} & \textbf{Fraction (\%)} \\
\midrule
Scene Encoder & 324,288 & 3.7 \\
Trajectory Decoder & 5,182,165 & 58.7 \\
Applicability Head + Scorer & 3,327,289 & 37.6 \\
\midrule
\textbf{Total} & \textbf{8,833,742} & \textbf{100.0} \\
\bottomrule
\end{tabular}
\end{table}

% \begin{algorithm}[ht]
% \caption{Lexicographic Trajectory Selection}
% \label{alg:lexicographic}
% \begin{algorithmic}[1]
% \State \textbf{Input:} Tier scores $\{\mathbf{S}^{(k)}\}_{k=1}^K$, confidences $\{p^{(k)}\}_{k=1}^K$, tolerances $\{\varepsilon_\ell\}$
% \State $\mathcal{C} \leftarrow \{1,\ldots,K\}$
% \For{$\ell = 0$ \textbf{to} $3$}
%     \State $S^* \leftarrow \min_{k \in \mathcal{C}} S_\ell^{(k)}$
%     \State $\mathcal{C} \leftarrow \{k \in \mathcal{C} : S_\ell^{(k)} \le S^* + \varepsilon_\ell\}$
%     \If{$|\mathcal{C}| = 1$} \textbf{break} \EndIf
% \EndFor
% \State \Return $\arg\max_{k \in \mathcal{C}} p^{(k)}$ \Comment{Confidence tie-break; then lowest index}
% \end{algorithmic}
% \end{algorithm} 

Given scenario $s{=}(\mathbf{x}_{\text{ego}},\mathbf{X}_{\text{agents}},\mathcal{M})$, RECTOR runs a three-stage pipeline:
\begin{enumerate}
\item \textbf{Scene encoding \& candidate generation:} An M2I-style encoder produces $\mathbf{e}_{\text{scene}} \in \mathbb{R}^{256}$; a Transformer--CVAE decoder generates $K{=}6$ trajectories with confidences.
\item \textbf{Applicability mechanism (tested reference, not deployment-default):} A 3.33\,M-parameter head predicts which of 28 rules apply. Section~\ref{subsubsec:applicability_negative} reports a negative result: a parameter-free always-on policy for Safety/Legal Pareto-dominates the learned head on cross-evaluated compliance \emph{and} accuracy, and is the recommended default; the learned head is retained as an ablation.
\item \textbf{Tiered lexicographic scorer:} Each candidate is scored on active rules via smooth proxies; the scorer returns the trajectory with the minimal violation vector under Safety~$\succ$~Legal~$\succ$~Road~$\succ$~Comfort.
\end{enumerate}

\noindent\small\textbf{Note:} The scorer has zero trainable parameters; the 3.33\,M count is the applicability MLP.
 
The encoder maps ego history, agent histories, and vectorized map context into a shared embedding $\mathbf{e}_{\text{scene}} \in \mathbb{R}^{D}$ ($D{=}256$) using pointwise encoding for agent and map tokens followed by Transformer fusion:
\begin{align}
\mathbf{h}_{\text{ego}} &= \text{Conv1D}(\mathbf{x}_{\text{ego}}, \text{kernel}{=}3) \in \mathbb{R}^{B \times 1 \times D}, \\
\mathbf{h}_{\text{lane},\ell} &= \max_p\;\text{MLP}(\mathcal{M}_{\text{lanes}}[\ell,p,:]) \in \mathbb{R}^{B \times 1 \times D}, \\
\mathbf{X}_{\text{tok}} &= \text{TransformerEnc}(\mathbf{h}_{\text{ego}} \oplus \mathbf{H}_{\text{agents}} \oplus \mathbf{H}_{\text{lanes}}), \\
\mathbf{e}_{\text{scene}} &= \text{QueryAttn}(\mathbf{q}_{\text{ego}}, \mathbf{X}_{\text{tok}}, \mathbf{X}_{\text{tok}}) \in \mathbb{R}^{B \times D},
\end{align}
where $\mathbf{q}_{\text{ego}}$ is a learned pooling query. The encoder is intentionally modest (Table~\ref{tab:param_breakdown}): compliance gains should come from rule-aware selection, not from inflating the backbone. Inputs are zero-padded to $N_{\text{pad}}{=}32$ agents and $L_{\text{pad}}{=}64$ lane segments (20 points each); the Transformer uses 3 layers, 8 heads, and a 1024-d FFN.
 
The decoder generates $K{=}6$ candidates over $T{=}50$ timesteps using mode-specific goal queries, a shared CVAE latent variable ($d_z{=}64$), and Transformer decoding ($d_{\mathrm{model}}{=}256$, 4 layers, 8 heads, 1024-d FFN, post-LN, GELU, dropout 0.1):
\begin{align}
\mathbf{g}_{\text{emb}}^{(k)} &= \text{GoalNet}(\mathbf{e}_{\text{scene}}, \mathbf{q}^{(k)}), \\
p(\mathbf{z} \mid \mathbf{e}_{\text{scene}}) &= \mathcal{N}(\boldsymbol{\mu}_{\text{prior}}, \text{diag}(\boldsymbol{\sigma}^2_{\text{prior}})).\nonumber
\end{align}
GoalNet is a two-layer MLP that maps the concatenated scene embedding and mode query ($[\mathbf{e}_{\text{scene}};\mathbf{q}^{(k)}] \in \mathbb{R}^{512}$) through a hidden layer of width 384 (GELU activation, dropout 0.1) to a 256-d goal embedding $\mathbf{g}_{\text{emb}}^{(k)}$. The mode queries $\{\mathbf{q}^{(k)}\}_{k=1}^{K}$, $\mathbf{q}^{(k)} \in \mathbb{R}^{256}$, are $K$ learned scene-independent embeddings, trained jointly in Stage~2. Diversity arises from the mode-specific query and a per-mode latent $\mathbf{z}^{(k)} \sim p(\mathbf{z}\mid\mathbf{e}_{\text{scene}})$ at inference; using the prior mean without distinct queries collapses the set. During training the posterior is $q(\mathbf{z}\mid \mathbf{e}_{\text{scene}},\mathbf{h}_{\text{gt}})=\mathcal{N}(\boldsymbol{\mu}_{\text{post}},\mathrm{diag}(\boldsymbol{\sigma}_{\text{post}}^2))$, with $\mathbf{h}_{\text{gt}}$ produced by a compact two-layer MLP (hidden 256, ReLU) over the flattened ground-truth future.
Each mode is decoded from $(\mathbf{e}_{\text{scene}}, \mathbf{g}_{\text{emb}}^{(k)}, \mathbf{z}^{(k)})$ into a state sequence $(x,y,\theta,v)$ in one non-autoregressive pass. Higher-order kinematics (acceleration, jerk) are obtained by finite differences during proxy evaluation. A two-layer regression head maps decoder outputs to waypoints; a linear confidence head produces mode probabilities via softmax. Per-mode latent sampling at inference is required to preserve diversity.

RECTOR does not apply a post-hoc kinematic feasibility filter. The generator is trained with a jerk-smoothness term ($\lambda_{\text{smooth}}{=}1.0$) that implicitly penalizes implausible trajectories, but no hard curvature or acceleration bound is enforced at inference. Acceleration and jerk proxies (L3.R0, L3.R1) implicitly penalize kinematic extremes at selection time; stronger generator-side guarantees are future work.

The applicability head is a \emph{tested reference component, not the deployment-default}: Sec.~\ref{subsubsec:applicability_negative} shows a parameter-free always-on policy for Safety/Legal Pareto-dominates it on cross-evaluated compliance \emph{and} accuracy. We retain its architecture as the most direct reference for future work on learned applicability. It uses a TierAwareBlock: each of $R{=}28$ rules has a learned query $\mathbf{q}_r \in \mathbb{R}^{256}$ that cross-attends over $\mathbf{e}_{\text{scene}}$, then passes through a two-layer MLP (256$\to$384$\to$1, GELU, sigmoid):
\begin{equation}
\tilde{a}_r = \sigma\!\left(\text{MLP}_r\bigl(\text{CrossAttn}(\mathbf{q}_r,\,\mathbf{e}_{\text{scene}})\bigr)\right) \in [0,1].
\end{equation}
Binary predictions $\hat{a}_r$ use per-tier thresholds $[0.05,0.15,0.30,0.50]$ (Safety$\to$Comfort) with tier-specific bias initialization $[{+}2.0,{+}1.0,0.0,{-}1.0]$; training uses focal BCE ($\gamma{=}2.0$) on oracle labels.

With the applicability mask $\hat{\mathbf{a}}$, the tiered scorer implements the lexicographic selection of Definition~\ref{def:lex_selection}:
\begin{equation}
S_\ell^{(k)}(s) = \sum_{r:\,\ell(r)=\ell} \tilde{w}_r\,\hat{a}_r(s) \cdot \bar{V}_{r,k}(s), \quad \ell \in \{0,1,2,3\}.
\end{equation}  
 
%-------------------------------------------------------------------------------
\subsection{Training Objective}
\label{subsec:training_objective}
%-------------------------------------------------------------------------------
 
RECTOR is trained in two stages. \textbf{Stage~1} (20 epochs): applicability head only, focal BCE on oracle labels. \textbf{Stage~2} (20+5 epochs): full end-to-end model with a composite loss:
\begin{align}
\mathcal{L} &= \lambda_{\text{WTA}}\mathcal{L}_{\text{WTA}} + \lambda_{\text{end}}\mathcal{L}_{\text{end}} + \lambda_{\text{KL}}\mathcal{L}_{\text{KL}} + \lambda_{\text{goal}}\mathcal{L}_{\text{goal}}  \\
&\quad + \lambda_{\text{smooth}}\mathcal{L}_{\text{smooth}} + \lambda_{\text{conf}}\mathcal{L}_{\text{conf}} + \lambda_{\text{app}}\mathcal{L}_{\text{app}} + \lambda_{\text{rule}}\mathcal{L}_{\text{rule}}.\nonumber
\end{align}
$\mathcal{L}_{\text{WTA}}$ is time-weighted Huber reconstruction on the best mode, $\mathcal{L}_{\text{end}}$ supervises the endpoint, $\mathcal{L}_{\text{KL}}$ regularizes the CVAE, $\mathcal{L}_{\text{goal}}$ aligns goal queries, $\mathcal{L}_{\text{smooth}}$ penalizes jerk, $\mathcal{L}_{\text{conf}}$ calibrates confidence, $\mathcal{L}_{\text{app}}$ supervises applicability, and $\mathcal{L}_{\text{rule}}$ provides weak proxy-based shaping ($\lambda_{\text{rule}}{=}1.0$). Its marginal contribution is not isolated here; compliance gains are primarily attributable to enforcement at the selection stage.

The KL term is $\mathcal{L}_{\text{KL}} = D_{\mathrm{KL}}\!\left(q(\mathbf{z}\mid \mathbf{e}_{\text{scene}},\mathbf{h}_{\text{gt}})\,\|\,p(\mathbf{z}\mid \mathbf{e}_{\text{scene}})\right)$. The rule-shaped term is the mean gated proxy cost across modes, $\mathcal{L}_{\text{rule}} = \frac{1}{K}\sum_{k=1}^{K}\sum_{\ell=0}^{3}\beta_{\ell}S_{\ell}^{(k)}$, with $\beta_{\ell}{=}1$ in our experiments. Averaging over modes can weakly discourage diversity; in practice, this is bounded by $\lambda_{\text{rule}} \ll \lambda_{\text{WTA}}$.
 
%-------------------------------------------------------------------------------
\subsection{Inference Procedure}
\label{subsec:inference}
%-------------------------------------------------------------------------------
\begin{algorithm}[ht]
\caption{RECTOR Inference}
\label{alg:inference}
\begin{algorithmic}[1]
\State \textbf{Input:} Scenario $s$, parameters $\theta$
\State \textbf{Output:} Selected trajectory index $k^*$
\State $\mathbf{e}_{\text{scene}} \leftarrow \text{SceneEncoder}(s)$
\For{$k = 1$ \textbf{to} $K$}
    \State $\mathbf{z}^{(k)} \sim p(\mathbf{z} \mid \mathbf{e}_{\text{scene}})$ \Comment{Independent sample per mode}
    \State $\tau^{(k)} \leftarrow \text{Decode}(\mathbf{e}_{\text{scene}}, \mathbf{g}_{\text{emb}}^{(k)}, \mathbf{z}^{(k)})$
\EndFor
\State $\mathbf{p} \leftarrow \text{softmax}(\text{logits})$
\State $\hat{\mathbf{a}} \leftarrow \mathbb{1}[\sigma(\text{ApplicabilityHead}(\mathbf{e}_{\text{scene}})) > \text{thresholds}]$
\For{each candidate $k$ and active rule $r$}
    \State $\bar{V}_{r,k} \leftarrow \text{RuleProxy}_r(\tau^{(k)}, s)$
\EndFor
\State Compute tier scores $S_\ell^{(k)}$ via Eq.~\eqref{eq:tier_score}
\State $k^* \leftarrow \text{LexicographicSelect}(\{\mathbf{S}^{(k)}\}, \mathbf{p}, \boldsymbol{\varepsilon})$
\State \Return $k^*$
\end{algorithmic}
\end{algorithm}
 
Inference is the forward pass without the posterior network or training losses. Selection is deterministic given $(\mathcal{T}, \mathbf{p}, \hat{\mathbf{a}})$, supporting regression testing under rule updates. Latency is dominated by generation; compliance evaluation is 1.4\,ms of the 7.3\,ms total.

% \begin{figure}[!t]
% \centering
% \includegraphics[width=0.85\columnwidth]{fig_latency_donut.png}
% \caption{Inference latency at batch size 128 on one A100 GPU (mean over 43{,}219 instances): 7.3\,ms/sample (137 samples/s). Generation dominates (5.9\,ms, 81\%); rule evaluation (1.0\,ms, 14\%) and lexicographic selection (0.4\,ms, 5\%) add 1.4\,ms total.}
% \label{fig:latency_donut}
% \end{figure}

\noindent\textbf{Plug-in contract.} The selection layer (stages 2--3) is decoupled from the generator. Any generator producing $\mathcal{T}{=}\{\tau^{(k)}\}_{k=1}^K$ with confidences $\mathbf{p}$ in world-frame at 10\,Hz (0.1\,s steps, 50 steps, $[x,y,\theta,v]$ state) can be substituted without changing the rule-evaluation or selection code. The output is a trajectory index $k^*$, the tier-score vector $\mathbf{S}^{(k^*)}$, and an \texttt{infeasible} flag when $S_0^{(k^*)} > 0$.

\section{Rule Evaluation and Data Pipeline}
\label{Sec:Rule_Evaluation_And_Data_Pipeline}
\begin{table*}[t]
\centering
\caption{Complete 28-rule catalog with applicability and proxy parameters. ``Applicable'' means support\_positive $> 0$ in WOMD v1.3.0 \texttt{validation\_interactive} (43,219 instances): 18 rules are invoked, and 10 are not observed. Rules without differentiable proxies are audit-only under Protocol~B.}
\label{tab:rule_catalog}
\label{tab:proxy_constants}
\scriptsize
\setlength{\tabcolsep}{2pt}
\resizebox{\textwidth}{!}{%
\begin{tabular}{lll p{3.3cm} c c c l p{4.6cm}}
\toprule
\textbf{Paper ID} & \textbf{Registry ID} & \textbf{Tier} & \textbf{Description} & \textbf{Applicable} & $\boldsymbol{\kappa_r}$ & \textbf{Threshold} & \textbf{Unit} & \textbf{Proxy Violation Semantics} \\
\midrule
L0.R0 & L0.R2 & Safety & Safe longitudinal distance & \checkmark & 2.0 & 2.0 & m & Longitudinal gap $< d_{\min}$ to lead vehicle \\
L0.R1 & L0.R3 & Safety & Safe lateral clearance & \checkmark & 2.0 & 0.5 & m & Lateral clearance $< d_{\text{lat}}$ to any agent \\
L0.R2 & L0.R4 & Safety & Crosswalk occupancy & \checkmark & 3.0 & 2.0 & m & Ego within crosswalk clearance zone \\
L0.R3 & L10.R1 & Safety & Collision avoidance (overlap) & \checkmark & 2.0 & 0.0 & m & OBB penetration depth (SAT) $> 0$ \\
L0.R4 & L10.R2 & Safety & VRU clearance buffer & --- & 2.0 & 1.0 & m & Distance to VRU $< d_{\text{VRU}}$ \\
\midrule
L1.R0 & L5.R1 & Legal & Traffic signal compliance & --- & 3.0 & 2.0 & m & Crossing stop line under red signal \\
L1.R1$^\dagger$ & L5.R2 & Legal & Priority / right-of-way & --- & --- & --- & --- & No differentiable proxy (audit-only) \\
L1.R2 & L7.R4 & Legal & Speed limit adherence & \checkmark & 2.0 & ---$^{\ddagger}$ & m/s & Speed $> v_{\text{limit}} + v_{\text{tol}}$ ($v_{\text{tol}}{=}$5\,mph) \\
L1.R3 & L8.R1 & Legal & Red-light stop compliance & \checkmark & 3.0 & 2.0 & m & Crossing stop line under red signal \\
L1.R4 & L8.R2 & Legal & Stop-sign compliance & \checkmark & 3.0 & 0.5 & m/s & Min speed in stop zone $> v_{\text{stop}}$ \\
L1.R5 & L8.R3 & Legal & Crosswalk yield to pedestrians & \checkmark & 3.0 & 5.0 & m & Ego within crosswalk yield radius \\
L1.R6 & L8.R5 & Legal & Wrong-way driving prevention & \checkmark & 2.0 & 2.356 & rad & Heading deviation $>$ 135$^\circ$ from lane direction \\
\midrule
L2.R0 & L3.R3 & Road & Drivable surface constraint & --- & 2.0 & 1.0 & m & Signed distance outside drivable surface \\
L2.R1 & L7.R3 & Road & Lane departure prevention & \checkmark & 2.0 & 1.8 & m & Lateral offset $>$ half lane width $+$ margin \\
\midrule
L3.R0 & L1.R1 & Comfort & Smooth longitudinal acceleration & \checkmark & 2.0 & 3.0 & m/s$^2$ & Longitudinal acceleration $> a_{\max}$ \\
L3.R1 & L1.R2 & Comfort & Smooth braking deceleration & \checkmark & 2.0 & 4.0 & m/s$^2$ & Braking deceleration $> a_{\text{brake}}$ \\
L3.R2 & L1.R3 & Comfort & Smooth lateral steering & --- & 2.0 & 0.5 & rad/s & Steering rate $> \omega_{\max}$ \\
L3.R3 & L1.R4 & Comfort & Speed consistency & --- & 2.0 & 2.0 & m/s & Speed change $> \Delta v_{\max}$ \\
L3.R4 & L1.R5 & Comfort & Jerk / lane-change smoothness & --- & 2.0 & 2.0 & m/s$^3$ & Jerk $> j_{\max}$ \\
L3.R5 & L4.R3 & Comfort & Left-turn gap acceptance & \checkmark & 2.0 & 4.0 & s & Left-turn TTC gap $< t_{\min}$ \\
L3.R6$^\dagger$ & L5.R3 & Comfort & Parking-zone violation & --- & --- & --- & --- & No differentiable proxy (audit-only) \\
L3.R7$^\dagger$ & L5.R4 & Comfort & School-zone speed compliance & --- & --- & --- & --- & No differentiable proxy (audit-only) \\
L3.R8$^\dagger$ & L5.R5 & Comfort & Construction-zone compliance & \checkmark & --- & --- & --- & No differentiable proxy (audit-only) \\
L3.R9 & L6.R1 & Comfort & Cooperative lane change & --- & 2.0 & 3.0 & s & Lane-change TTC $< t_{\min}$ \\
L3.R10 & L6.R2 & Comfort & Safe following distance & \checkmark & 2.0 & 2.0 & s & Following time $< t_{\text{follow}}$ \\
L3.R11 & L6.R3 & Comfort & Intersection negotiation & \checkmark & 2.0 & 3.0 & s & Intersection TTC $< t_{\min}$ \\
L3.R12 & L6.R4 & Comfort & Pedestrian interaction & \checkmark & 2.0 & 3.0 & m & Pedestrian proximity $< d_{\text{VRU}}$ \\
L3.R13 & L6.R5 & Comfort & Cyclist interaction & \checkmark & 2.0 & 3.0 & m & Cyclist proximity $< d_{\text{VRU}}$ \\
\bottomrule
\end{tabular}
}
\vspace{1mm}
\caption*{\footnotesize \textbf{Applicable:} \checkmark~= rule was invoked (support\_positive $> 0$) in WOMD v1.3.0; ---~= never invoked in 43,219 evaluation instances. $\kappa_r$: softness parameter in the exponential cost function (Eq.~\ref{eq:exponential_cost}). Higher $\kappa_r$ leads to faster saturation. Signal-related proxies (L1.R0, L1.R3--R5, L0.R2) use $\kappa_r{=}3.0$; all others use $\kappa_r{=}2.0$. For rules where the threshold is inherent to the violation definition (e.g., speed limit), ``---'' indicates a scenario-dependent threshold. L1.R1, L3.R6--L3.R8 have no proxy and remain audit-only. Additional penalty scaling: yellow-light violations $\times 0.3$, flashing-red $\times 0.5$. $^{\ddagger}$L1.R2 (speed limit): the per-scenario threshold is the WOMD lane-level speed limit when present. About 42\% of \texttt{validation\_interactive} lane segments lack an explicit speed-limit annotation; for those segments we fall back to a default of 25\,mph (urban) or 35\,mph (where the segment carries a freeway/ramp class label), which matches the U.S.\ statutory default for unposted urban roadways. Sensitivity at $\pm 5$\,mph defaults changes L1.R2 violation rate by $<0.4$~pp on Protocol~B.}
\vspace{-10pt}
\end{table*}

\begin{table}[ht]
\centering
\caption{Training vs. evaluation setup. Training uses smooth proxies for 24/28 rules for gradient learning; evaluation uses the full 28-rule Waymo framework with complete map context.}
\label{tab:training_eval_comparison}
\begin{tabular}{lcc}
\toprule
\textbf{Aspect} & \textbf{Training} & \textbf{Evaluation} \\
\midrule
Rules evaluated & 24 (proxied) & 28 (all) \\
Applicability source & Predicted ($\hat{a}_r$) & Oracle ($a_r^*$) \\
Map context & Partial & Full \\
Gradient flow & Yes & No \\
Speed & Fast ($\sim$1\,ms) & Moderate ($\sim$5\,ms) \\
\bottomrule
\end{tabular}
\vspace{-10pt}
\end{table}

\begin{table}[ht]
\centering
\caption{Data augmentation strategies in the ego-centric frame. Agent dropout simulates occlusion, trajectory noise regularizes dynamics, and flips/rotations improve spatial invariance; rule-targeted augmentation adds controlled-violation samples to balance proxy supervision.}
\label{tab:data_augmentation}
\begin{tabular}{llp{4cm}}
\toprule
\textbf{Augmentation} & \textbf{Parameters} & \textbf{Purpose} \\
\midrule
Rotation & $\pm 15^\circ$ & Orientation invariance \\
Translation & $\pm 2.0\,\mathrm{m}$ & Position invariance \\
Scaling & $0.95$--$1.05$ & Scale robustness \\
Agent dropout & 10\% prob. & Occlusion handling \\
Trajectory noise & $\mathcal{N}(0, 0.1^2)\,\mathrm{m}$ & Smoothness regularisation \\
Time reversal & 50\% prob. & Temporal symmetry \\
Mirror flip & 50\% prob. & Left-right symmetry \\
\bottomrule
\end{tabular}
\end{table}

This section specifies the rule catalog and the proxy layer used for selection-time ranking. RECTOR evaluates 28 rules across four tiers, with differentiable proxies for 24. Throughout, Protocol~A and Protocol~B refer to the settings defined in Section~\ref{subsec:eval_protocols}.

%-------------------------------------------------------------------------------
\subsection{Rule Catalog Overview}
\label{subsec:rule_catalog}
%-------------------------------------------------------------------------------

The catalog has 28 rules in four tiers consistent with Section~\ref{Sec:Problem_Formulation}: Tier~0 (Safety, physical safety/collision), Tier~1 (Legal, traffic-law concepts in the WOMD annotation framework\footnote{See the Legal-tier disclaimer in Sec.~\ref{Sec:Introduction}. In this document "rule" or any similar term is used as a mathematical term for constraints; it does not have any legal implications.}), Tier~2 (Road, drivable-surface/lane keeping), Tier~3 (Comfort, ride quality and context policies). Paper IDs $\mathrm{L}\ell.\mathrm{R}i$ are tier-local; Registry IDs in Table~\ref{tab:rule_catalog} index the \texttt{waymo\_rule\_eval} evaluator.

Of the 28 rules, 18 are invoked at least once across 43{,}219 instances; 10 are never observed. Low- or zero-support rules provide no training signal, which explains the learned head's high FNRs on rare rules and motivates the always-on policy for Safety and Legal (Sec.~\ref{Sec:Accuracy_SectionRule-Compliance_Behavior_and_Efficiency}).

We use the WOMD rule-evaluation framework as the authoritative auditor (consistent semantics, oracle applicability labels, stable regression targets). The full evaluator is non-differentiable, so RECTOR uses smooth proxies for optional rule-shaped training losses and stable selection-time ranking. Protocol~A evaluates only the proxy subset with predicted applicability; Protocol~B evaluates the full catalog with oracle applicability.

\subsubsection{Proxy Coverage}

\textbf{24 of 28 rules have smooth proxies} (86\% coverage). All Tier~0 rules are covered, so the highest-priority constraints always contribute a training signal. Table~\ref{tab:training_eval_comparison} summarizes training vs.\ evaluation.

The four unproxied rules (L1.R1, L3.R6, L3.R7, L3.R8) split into two categories: one requires multi-agent intent reasoning; the other three depend on map annotations that are unreliable in WOMD v1.3.0. At selection time, they contribute zero cost ($\hat{a}_r \cdot \bar{V}_r \equiv 0$) regardless of applicability, so their violations are invisible to the selector and only accumulate in Protocol~B audits. Notably, L3.R8 (construction-zone compliance) is marked applicable in WOMD but has no proxy, so it adds to Protocol~B counts while remaining invisible to the selector.

\subsubsection{Normalization of Proxy Severities}

Proxy outputs have heterogeneous units. Each proxy yields a raw severity $V_r(\tau;s)\ge 0$ and a normalized severity $\bar{V}_r(\tau;s)\in[0,1]$:
\begin{equation}
\bar{V}_r(\tau;s)=\min\!\left(1,\,\frac{V_r(\tau;s)}{\alpha_r}\right),
\end{equation}
with $\alpha_r{>}0$ from Section~\ref{Sec:Problem_Formulation}. This linear form is a proof-compatible template; experiments use the exponential form from Eq.~\eqref{eq:normalization_contract}, repeated for convenience:
\begin{equation}
c_r(v) = 1 - \exp\!\bigl(-\kappa_r \cdot \max(0, v)\bigr),
\label{eq:exponential_cost}
\end{equation}
where $\kappa_r > 0$ controls saturation. Both forms satisfy the $[0,1]$ contract required by Proposition~\ref{thm:scalarization}; reported results use the exponential form.

%-------------------------------------------------------------------------------
\subsection{Proxy Implementation Details}
\label{subsec:proxy_details}
%-------------------------------------------------------------------------------

We report one representative proxy per tier; the rest follow the same hinge- or event-based pattern with parameters in Table~\ref{tab:proxy_constants}. All run at 10\,Hz from decoded trajectories.

\textbf{Safety (L0.R3, collision avoidance):} $V=\sum_{t,j}\min(p_{\text{long}}(t,j),p_{\text{lat}}(t,j))$, with SAT-style overlap depths along the vehicle axes. \textbf{Legal (L1.R3, red-light stop):} $V=\sum_t r(t)\,\sigma_\alpha(\mathrm{cross}(t){-}\mathrm{cross}(t{-}1))$, $r(t)$ a soft red-signal indicator. \textbf{Road (L2.R1, lane departure):} $V=\sum_t\max(0,|d_t^{\text{lat}}|-w_{\text{lane}}/2-m)$. \textbf{Comfort (L3.R4, jerk):} $V=\sum_t\max(0,|j_t|-j_{\max})$.

%-------------------------------------------------------------------------------
\subsection{Data Pipeline}
\label{subsec:data_pipeline}
%-------------------------------------------------------------------------------

The same rule-evaluation machinery is used during proxy-based training and full-catalog auditing.

Dataset statistics and splits appear in Section~\ref{Sec:Experiments}. The evaluation set spans dense intersections, freeway merges, lane changes, and other interaction-rich geometries, requiring the catalog to cover both high-priority safety events and lower-priority comfort behaviors.

Raw TFRecord scenarios are converted to ego-centric tensors: coordinates are transformed to the ego frame, nearby agents and relevant lanes are selected, finite-difference kinematics are extracted, and map primitives (lane polylines, stop lines, crosswalks, drivable-surface polygons) are vectorized. Standard spatial and temporal augmentation is summarized in Table~\ref{tab:data_augmentation}.

We also generate synthetic collision, speed, lane-departure, and red-light violations to improve proxy supervision on rare but high-impact events.

\section{Datasets, Metrics, Baselines, and Implementation Details}
\label{Sec:Experiments}
This section fixes the evaluation scope. Unless stated otherwise, selector comparisons use the augmented WOMD \texttt{validation\_interactive} split (43{,}219 ego-scenario instances from 12{,}800 base scenarios), $K{=}6$, and report compliance on the \emph{selected} trajectory. $K{=}6$ follows the WOMD motion-prediction challenge convention; preliminary $K\in\{8,12\}$ runs gave no systematic compliance improvement at extra cost. Protocol~A is the operating selector benchmark (matches RECTOR's proxy stack); Protocol~B is the completeness reference (full 28-rule catalog with oracle applicability). We distinguish candidate-set quality from selected-trajectory quality, since RECTOR changes only the selection.
%-------------------------------------------------------------------------------
\subsection{Dataset}
\label{subsec:dataset}
%-------------------------------------------------------------------------------

We use the Waymo Open Motion Dataset (WOMD) v1.3.0~\cite{WOMD_20201}, which provides temporally aligned trajectories for the ego and surrounding agents, HD map context (lane topology, crosswalks, boundaries), and traffic-control state (signals, stop signs) required for both motion prediction and rule applicability/violation computation. Table~\ref{tab:womd_stats} summarizes key statistics. Fig.~\ref{fig:scenario_gallery} shows representative interactive scenarios spanning all four tiers.

\begin{figure*}
\centering

% ── Row 1: Intersection turn + Dense intersection ────────────────────────────
\begin{subfigure}[b]{0.22\textwidth}
  \includegraphics[width=\textwidth]{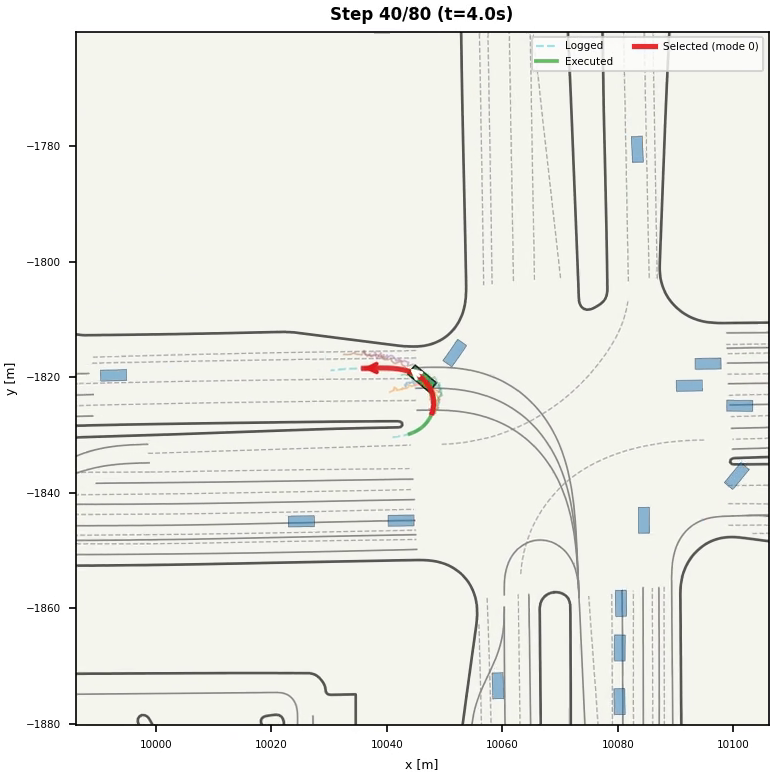}
  \caption{Intersection turn — scenario map}
\end{subfigure}\hfill
\begin{subfigure}[b]{0.22\textwidth}
  \includegraphics[width=\textwidth]{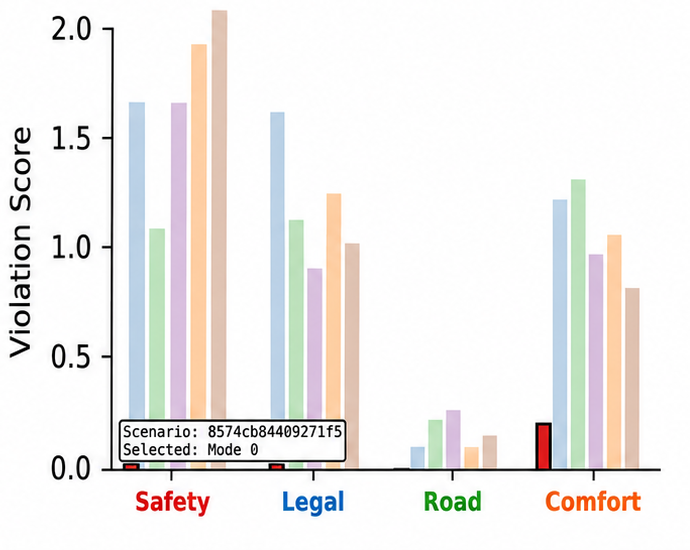}
  \caption{Intersection turn — per-tier violations}
\end{subfigure}\hfill
\begin{subfigure}[b]{0.22\textwidth}
  \includegraphics[width=\textwidth]{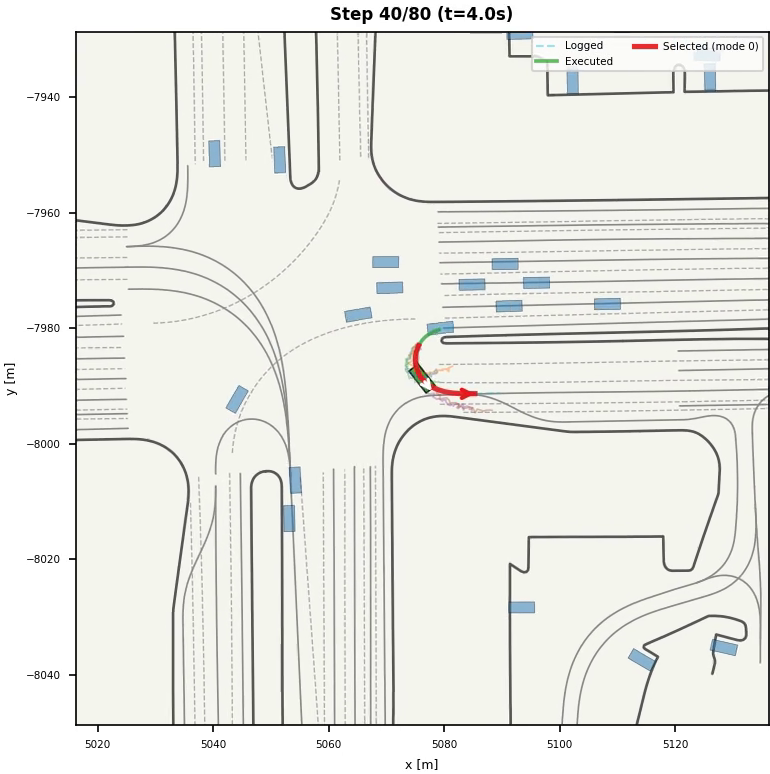}
  \caption{Dense intersection — scenario map}
\end{subfigure}\hfill
\begin{subfigure}[b]{0.22\textwidth}
  \includegraphics[width=\textwidth]{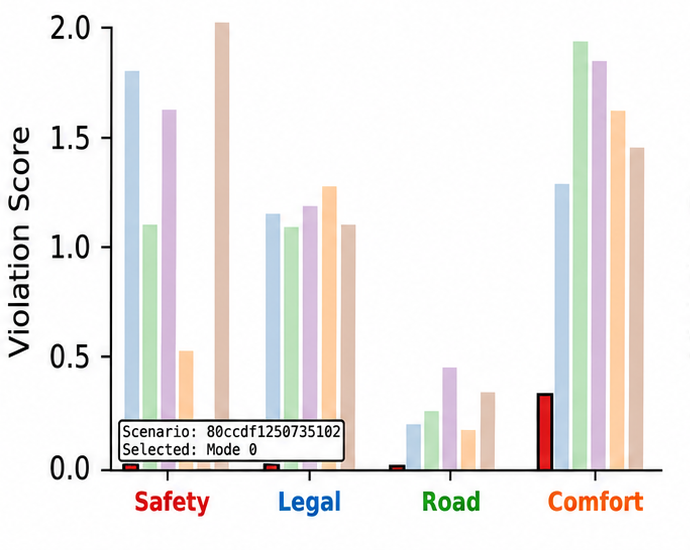}
  \caption{Dense intersection — per-tier violations}
\end{subfigure}

\hfill
\begin{subfigure}[b]{0.22\textwidth}
  \includegraphics[width=\textwidth]{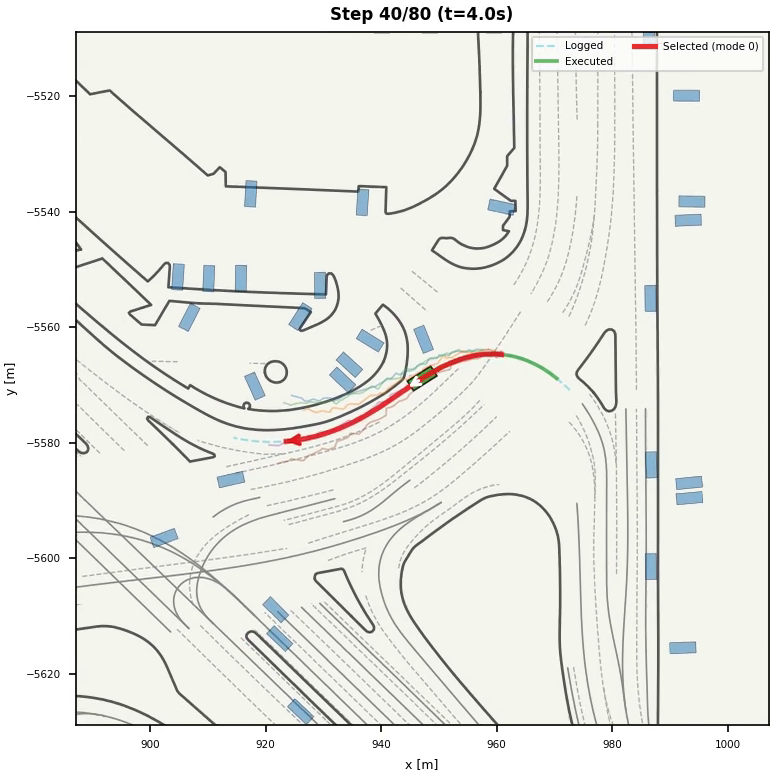}
  \caption{Exit-ramp merge — scenario map}
\end{subfigure}\hfill
\begin{subfigure}[b]{0.22\textwidth}
  \includegraphics[width=\textwidth]{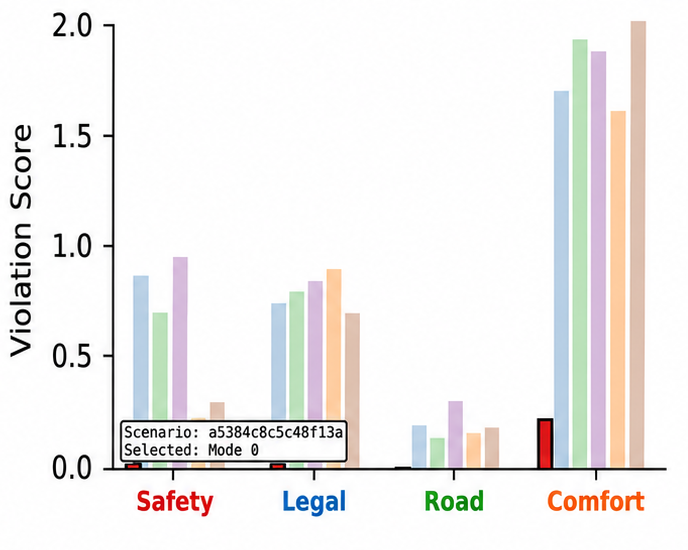}
  \caption{Exit-ramp merge — per-tier violations}
\end{subfigure}\hfill
\begin{subfigure}[b]{0.22\textwidth}
  \includegraphics[width=\textwidth]{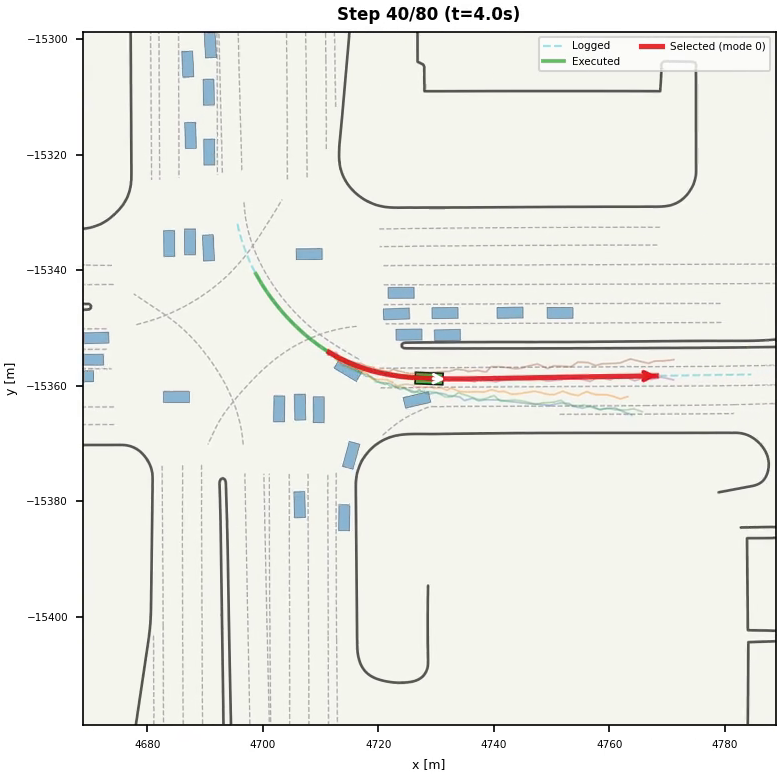}
  \caption{Lane change — scenario map}
\end{subfigure}\hfill
\begin{subfigure}[b]{0.22\textwidth}
  \includegraphics[width=\textwidth]{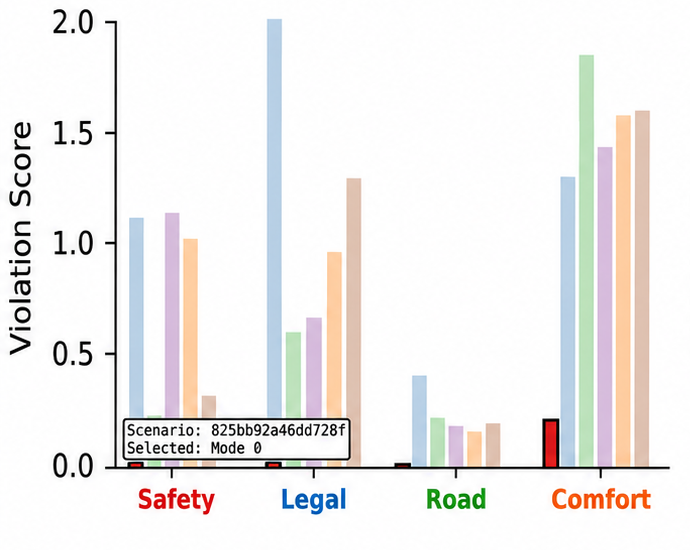}
  \caption{Lane change — per-tier violations}
\end{subfigure}

\caption{\textbf{Infrastructure-verification figure; not a selector comparison.} Four representative WOMD v1.3.0 \texttt{validation\_interactive} scenarios from the Waymax log-replay bridge (frames 35--40 of 8\,s). Left panels show BEV maps (blue: surrounding agents; red: RECTOR-selected trajectory; green: executed trajectory); right panels show per-tier violation scores across the six candidate modes. Because the \emph{MockLogReplayGenerator} returns the logged trajectory as $K{=}6$ near-identical candidates ($\sigma{=}0.001$), all three selectors produce identical selections by construction; the figure is included only to document that the Waymax execution path runs end-to-end and that the proxy scores trace correctly along the replayed rollout. Selector-disagreement BEV gallery on a reactive non-trivial generator is deferred to the journal-extension follow-up (Sec.~\ref{Sec:Accuracy_SectionRule-Compliance_Behavior_and_Efficiency}, ``Waymax Bridge''; Sec.~\ref{Sec:Conclusion}, next-step~(iii)).}
\label{fig:scenario_gallery}
\end{figure*}

\begin{table}[t]
\centering
\caption{Waymo Open Motion Dataset Statistics (Interactive Scenarios)}
\label{tab:womd_stats}
\begin{tabular}{ll}
\toprule
\textbf{Property} & \textbf{Value} \\
\midrule
\multicolumn{2}{l}{\textit{Dataset Overview}} \\
Total augmented scenarios & 159,924 \\
Training split & 102,040 augmented scenarios \\
Validation split & 12,800 base / 43,219 augmented \\
Test split & 14,665 augmented scenarios (withheld) \\
Interactive criterion & Exactly 2 objects\_of\_interest \\
Geographic regions & Phoenix, San Francisco, Mountain View \\
Traffic convention & Right-hand traffic (US) \\
Average agents per scenario & 8.2 \\
Maximum agents & 40+ \\
Object types & Vehicles, pedestrians, cyclists \\
Agent tracking consistency & Validity masks over the 9.1\,s window \\
\midrule
\multicolumn{2}{l}{\textit{Map Context}} \\
Map coverage & Full HD maps \\
Lane annotations & Complete lane topology \\
Traffic signals & Per-timestep state annotations \\
Stop signs & Location and orientation \\
Crosswalks & Polygon boundaries \\
Speed limits & Where available \\
\bottomrule
\end{tabular}
\end{table}

We follow WOMD's official interactive splits to avoid leakage. Training uses 102{,}040 augmented interactive scenarios; the selector benchmark uses the augmented \texttt{validation\_interactive} split (43{,}219 evaluation instances from 12{,}800 validation scenarios across 150 shards). Each augmented TFRecord wraps the original Scenario proto with per-rule applicability labels, violation flags, and severity scores from \texttt{waymo\_rule\_eval}; these labels are the oracle ground truth used in ablations (Table~\ref{tab:applicability_ablation}). The test split is withheld.

The $\sim$3.4$\times$ \emph{evaluation}-set augmentation (12{,}800 $\to$ 43{,}219) comes from ego-perspective permutation: each base scenario with two \texttt{objects\_of\_interest} is evaluated once per ego designation (including the original \texttt{sdc\_index}), giving 2--4 instances per base scenario depending on interaction type and validity-mask coverage. \emph{Geometric training augmentations in Table~\ref{tab:training_config} (random rotation, translation noise, agent dropout, time reversal, mirror flip) are applied only during training, not at evaluation.} Evaluation, therefore, uses original world-frame coordinates and timing. Higher absolute violation rates than single-perspective evaluation reflect harder interaction roles (yielding, merging) for which the predictor is less calibrated.

WOMD defines an \emph{interactive} scenario by exactly two \texttt{objects\_of\_interest}---the primary interacting entities---emphasizing cases where inter-agent coupling and priority reasoning are salient, an appropriate stress test for rule-aware selection. We use all interaction types in the validation split: v2v, v2p, v2c, and others.

Filtering is minimal and removes only degenerate cases: scenarios with fewer than 2 agents, ego speed consistently below $0.1\,\text{m/s}$, or incomplete map features. About 98\% of scenarios survive. The Legal-tier catalog reflects U.S.\ right-hand traffic norms in WOMD; other jurisdictions require re-parameterizing the catalog and re-evaluating proxy coverage.

%-------------------------------------------------------------------------------
\subsection{Evaluation Protocols}
\label{subsec:eval_protocols}
%-------------------------------------------------------------------------------

Compliance numbers are meaningful only when the evaluation scope is unambiguous. We use two protocols throughout.

\noindent\textbf{Protocol~A (operating selector benchmark).} The 24-rule proxy subset with predicted applicability $\hat{a}_r$. This is the like-for-like selector comparison since it matches the rule-aware re-ranking stack.

\noindent\textbf{Protocol~B (oracle-applicability proxy reference).} The full 28-rule catalog with oracle applicability $a_r^*$ and the same proxy-based violation stack. Used for full-rule accounting, applicability ablations, and conservative absolute compliance within the proxy-evaluator framework.

\noindent\textbf{Aggregation.} \emph{Total~Viol.\%} is the fraction of scenarios where the selected trajectory violates at least one rule (union over the rulebook). \emph{Per-tier~Viol.\%} is the fraction violating at least one rule in that tier. \emph{S+L~Viol.\%} is the fraction violating any Safety or Legal rule. Per-tier rates may sum above Total because one scenario can violate multiple tiers.

\noindent Protocol~B is more complete but its absolute rates need not exceed Protocol~A: oracle applicability re-exposes rules Protocol~A suppresses while removing false positives from the learned head. Protocol~B is therefore the stricter scope reference (oracle applicability + full catalog), rather than a separate, non-differentiable evaluator pass. For deployment-facing interpretation, Protocol~B with always-on Safety/Legal activation is the most conservative reading.

%-------------------------------------------------------------------------------
\subsection{Evaluation Metrics}
\label{subsec:metrics}
%-------------------------------------------------------------------------------

We report three metric groups: trajectory accuracy, rule compliance, and computational efficiency.

\subsubsection{Trajectory Accuracy} Standard ADE/FDE/minADE/minFDE/MissRate (Sec.~\ref{Sec:Problem_Formulation}) over $T{=}50$ at 10\,Hz with miss threshold $\delta{=}2.0$\,m, plus selected-trajectory variants $\text{selADE}{=}\text{ADE}(\hat\tau,\tau^{\mathrm{gt}})$ and $\text{selFDE}{=}\text{FDE}(\hat\tau,\tau^{\mathrm{gt}})$ that quantify decision quality rather than only candidate-set diversity.

\subsubsection{Rule Compliance} \emph{Tier-wise violation rate} $\text{ViolationRate}_\ell{=}\tfrac{1}{N}\sum_i\mathbb{1}[S_\ell(\tau_i;s_i){>}0]$; \emph{S+L} is the fraction violating any Safety or Legal rule; \emph{Total} is the union over the rulebook. Per-tier rates may sum to more than the total because a single scenario can span multiple tiers. Compliance Rate is the complement.

\subsubsection{Computational Efficiency} Mean/P95 inference latency, throughput, parameter counts, peak GPU memory.

%-------------------------------------------------------------------------------
\subsection{Baseline Methods}
\label{subsec:baseline_methods}
%-------------------------------------------------------------------------------

Comparisons serve two purposes: external baselines situate candidate-set accuracy among prior WOMD predictors, and internal baselines isolate the selector on a fixed candidate set.

\subsubsection{External Baselines} Candidate-set accuracy is compared against five published methods (LaneGCN~\cite{liang2020lanegcn}, DenseTNT~\cite{gu2021densetnt}, M2I~\cite{Sun_2022}, Wayformer~\cite{nayakanti2023wayformer}, MTR~\cite{shi2022motion}) in Table~\ref{tab:main_results}; comparisons are approximate due to implementation differences.

\subsubsection{Internal Selectors} All three operate on the same $K{=}6$ candidate set. \textbf{Confidence Only:} $\hat\tau{=}\arg\max_k p^{(k)}$. \textbf{Weighted Sum:} $\hat\tau{=}\arg\min_k \sum_r w_r\hat a_r\bar V_{r,k}$ with uniform $w_r{=}1$ and the default lowest-index tiebreak (compliance-equivalent all-zero cases need not match RECTOR's confidence tiebreak; the qualitative distinction is in Sec.~\ref{subsec:selection_results}). \textbf{Human GT (reference):} ground-truth trajectories under the identical Protocol~A pipeline; the \emph{imitation ceiling} for any system that perfectly replicates recorded behavior under the same stack. Its non-zero Protocol~A rate reflects that proxy thresholds were calibrated for prediction evaluation rather than imitation, and that human driving is occasionally aggressive relative to proxy margins. The 12.8\% Legal-tier rate under Protocol~B is a policy-relevant context, not a target.

%-------------------------------------------------------------------------------
\subsection{Implementation Details}
\label{subsec:implementation}
%-------------------------------------------------------------------------------

To enable reproducibility, we report the training and inference configuration below.

\subsubsection{Model Architecture} An interaction-aware encoder with a multimodal CVAE decoder, a rule-applicability head, and a tiered scorer (Table~\ref{tab:param_breakdown}); capacity beyond the motion backbone concentrates in applicability, the scorer is structural.

\subsubsection{Training Configuration} AdamW with OneCycleLR cosine and brief warm-up; hyperparameters, loss weights, augmentation, and hardware in Table~\ref{tab:training_config}.

\begin{table}[t]
\centering
\caption{Training hyper-parameters and Configuration}
\label{tab:training_config}
\begin{tabular}{ll}
\toprule
\textbf{hyper-parameter} & \textbf{Value} \\
\midrule
\multicolumn{2}{l}{\textit{Optimizer Settings}} \\
Optimizer & AdamW \\
Max learning rate & $3 \times 10^{-4}$ \\
Learning rate schedule & OneCycleLR (cosine) \\
Weight decay & $1 \times 10^{-4}$ \\
Gradient clipping & max norm = 1.0 \\
Batch size & 256 \\
Epochs (Stage~1 / Stage~2) & 20 / 20+5 \\
Precision & Mixed (AMP) \\
Random seed & 42 \\
\midrule
\multicolumn{2}{l}{\textit{Loss Weights}} \\
$\lambda_{\text{WTA}}$ (reconstruction) & 20.0 \\
$\lambda_{\text{end}}$ (endpoint) & 5.0 \\
$\lambda_{\text{KL}}$ (KL divergence) & 0.05 \\
$\lambda_{\text{goal}}$ (goal prediction) & 2.0 \\
$\lambda_{\text{smooth}}$ (smoothness) & 1.0 \\
$\lambda_{\text{conf}}$ (confidence) & 1.0 \\
$\lambda_{\text{app}}$ (applicability) & 1.0 \\
$\lambda_{\text{rule}}$ (proxy compliance) & 1.0 \\
\midrule
\multicolumn{2}{l}{\textit{Model Selection}} \\

Selection criterion & Best validation loss \\
Early stopping patience & 15 epochs \\
\midrule
\multicolumn{2}{l}{\textit{Data Augmentation}} \\
Random rotation & $\pm 15^\circ$ \\
Position noise & $\mathcal{N}(0, 0.1^2)$\,m \\
Agent dropout & 10\% \\
Time reversal$^*$ & 50\% probability \\
Mirror flip & 50\% probability \\
\bottomrule
\end{tabular}
\vspace{1mm}
\caption*{\footnotesize $^*$Time reversal is applied to spatial position sequences only, not to rule violation labels or temporal rule semantics. Rule applicability and violation labels are computed on the original (non-reversed) trajectory.  Training follows a two-stage procedure: Stage~1 trains only the applicability head (focal BCE, $\gamma{=}2.0$) on frozen encoder features for 20 epochs; Stage~2 fine-tunes the full model end-to-end with all eight loss terms for 20+5 epochs.  The loss weights in this table apply to Stage~2; Stage~1 uses $\mathcal{L}_{\text{app}}$ only.}
\vspace{-10pt}
\end{table}

\subsubsection{Loss Functions and Inference} The training objective is the flat weighted sum of the eight terms in Sec.~\ref{subsec:training_objective}; the model with the lowest composite validation loss is selected, and because the reconstruction term ($\lambda_{\text{WTA}}{=}20.0$) dominates, this closely tracks the best validation ADE. Training follows a two-stage procedure (Stage~1: an applicability head with focal BCE; Stage~2: a full end-to-end model). All selector comparisons use the final model selected on the 43{,}219-instance augmented validation set; the augmented-set minADE of 0.805\,m reflects evaluation across all ego perspectives. Inference runs in FP32 at batch size 128; latency, throughput, and memory are reported in Sec.~\ref{Sec:Accuracy_SectionRule-Compliance_Behavior_and_Efficiency}.

All experiments use Python~3.10, PyTorch~2.2, CUDA~12.1, Waymo Open Dataset SDK~v1.5.2 (the parsing library; the dataset itself is WOMD v1.3.0, as in Section~\ref{Sec:Experiments}), and a fixed seed of 42 with deterministic CUDA. Full dependency versions and the preprocessing pipeline accompany the code release.

\subsubsection{Code and Artifacts}
\label{subsec:code_release}

The full training pipeline, rule catalog with proxy implementations, applicability head, lexicographic selector, evaluation scripts, and the augmentation pipeline that produced the 43{,}219 evaluation instances will be released at \texttt{https://github.com/\textlangle{}anonymized\textrangle/RECTOR} (URL and Zenodo DOI to be activated upon acceptance; an anonymized snapshot is available to reviewers on request). The release includes seed configuration, deterministic CUDA flags, and the proxy/auditor concordance harness used in Section~\ref{Sec:Section_VIII_Verification_Invariants_Numerical_Stability_and_Integration_Tests}.

\subsubsection{Scope, Caveats, and Limitations of This Evaluation}
\label{subsec:eval_limitations}

All selector comparisons in Section~\ref{Sec:Accuracy_SectionRule-Compliance_Behavior_and_Efficiency} are bounded by the following methodological caveats. (i)~\textbf{Validation-split model selection.} The WOMD test split lacks the per-rule applicability/violation labels required by our augmented pipeline, so we select the final checkpoint by best validation composite loss and report on the same augmented \texttt{validation\_interactive} split; absolute generation and Protocol~A numbers therefore carry an unquantified optimistic bias, while Protocol~B selector deltas are insulated from learned-applicability bias. Held-out test-split reporting is an explicit next step (Section~\ref{Sec:Conclusion}). (ii)~\textbf{Statistical tests.} The 43{,}219 augmented instances are paired across selectors on identical candidate sets; multiple instances share base scenarios and map context, so significance values reflect paired-instance contrasts, not $N{=}43{,}219$ independent draws. A cluster-robust reading over $\sim$12{,}800 base scenarios yields the same selector ordering with wider intervals. (iii)~\textbf{Sensitivity scope.} Reported results use $\varepsilon{=}10^{-3}$, $K{=}6$, $w_r{=}1$; spot-checks at $\varepsilon\in\{10^{-4},10^{-2}\}$ and $K\in\{8,12\}$ do not change the Protocol~B selector ordering, and the full grid is in supplementary material. (iv)~\textbf{Protocol scope.} All compliance numbers are proxy-evaluator results under Protocol~A or Protocol~B and are not full Waymo-evaluator passes; concordance and the L0.R3 caveat are characterized in Section~\ref{Sec:Section_VIII_Verification_Invariants_Numerical_Stability_and_Integration_Tests}.

\section{Accuracy, Rule Compliance, Behavior, and Efficiency}
\label{Sec:Accuracy_SectionRule-Compliance_Behavior_and_Efficiency}
This section reports the main empirical findings on the augmented WOMD \texttt{validation\_interactive} split (43{,}219 instances, $K{=}6$). We lead with \textbf{Protocol~B} (full 28-rule catalog, oracle applicability) as the primary headline because it is insulated from the learned applicability head's calibration issues (Sec.~\ref{subsec:ablations_main}) and exercises all four tiers; Protocol~A (24 proxy rules, predicted applicability) is the operating-stack diagnostic. \emph{Best-of-$K$} accuracy measures candidate-set coverage; \emph{selected} accuracy measures the executed trajectory. All compliance numbers are proxy-evaluator results at their stated scope, not full-evaluator certificates (Sec.~\ref{subsec:eval_limitations}).

\paragraph*{Headline (Protocol~B)} Rule-aware selection reduces \textbf{Safety+Legal violations from 28.58\% to 20.42\%} (8.16~pp) and \textbf{Total from 40.32\% to 32.41\%} (7.91~pp) versus confidence-only on the same candidates (Table~\ref{tab:per_tier_oracle}); per-tier reductions hold on every tier. Under adversarial confidence corruption (Sec.~\ref{subsubsec:corruption}), confidence-only fails in 100\% of scenarios while rule-aware selection rejects the injection in $\sim$96\%. All Safety figures are proxy-evaluator rates; L0.R3 has 80.7\% FNR against the Waymo full evaluator (Sec.~\ref{Sec:Section_VIII_Verification_Invariants_Numerical_Stability_and_Integration_Tests}), so a post-selection full-evaluator audit is required for any deployment-facing reading.

We begin with candidate-set accuracy, which upper-bounds any selector. Table~\ref{tab:main_results} places RECTOR's generation among published methods: \textbf{minADE\,=\,0.805\,m}, \textbf{minFDE\,=\,1.513\,m}, \textbf{Miss\,=\,23.33\%}. The candidate set is strong enough for meaningful selection experiments; the rest of the section isolates the selector.

\begin{table}[t]
\centering
\caption{Candidate-set accuracy on WOMD \texttt{validation\_interactive} (43{,}219 augmented instances, $K{=}6$). minADE/minFDE are best-of-$K$ (FDE from the ADE-best mode), and Miss uses a 2.0\,m FDE threshold. RECTOR standard deviations are $\pm$0.828\,m (minADE) and $\pm$2.07\,m (minFDE). $^\dagger$Baselines are reported from original papers and may use different splits, augmentation, or $K$, so comparisons are approximate.}
\label{tab:main_results}
\begin{tabular}{lccc}
\toprule
\textbf{Method} & \textbf{minADE (m)} \color{green}{$\downarrow$} & \textbf{minFDE (m)} \color{green}{$\downarrow$} & \textbf{Miss (\%)} \color{green}{$\downarrow$} \\
\midrule
\multicolumn{4}{l}{\textit{Published results (reported from original papers$^\dagger$)}} \\
LaneGCN & 0.87 & 1.92 & 28.4 \\
DenseTNT & 0.73 & 1.55 & 20.1 \\
M2I & 0.75 & 1.63 & 22.8 \\
Wayformer & 0.68 & 1.42 & 18.9 \\
MTR & 0.70 & 1.48 & 19.5 \\
\midrule
\multicolumn{4}{l}{\textit{Reproduced in our pipeline}} \\
\textbf{RECTOR (Ours)} & \textbf{0.805} & \textbf{1.513} & \textbf{23.33} \\
\bottomrule
\end{tabular}
\end{table}

%-------------------------------------------------------------------------------
\subsection{Selection Strategy Comparison}
\label{subsec:selection_results}
%-------------------------------------------------------------------------------

This experiment isolates the paper's central claim: compliance can be improved \emph{by selection alone} when candidates are held fixed. We apply three strategies to the same candidate set $\mathcal{T}(s)$---Confidence Only ($\arg\max_k p^{(k)}$), Weighted Sum ($\arg\min_k \sum_r w_r \bar{V}_{r,k}$), and RECTOR (lexicographic)---on 43{,}219 instances. Table~\ref{tab:per_tier_oracle} is the primary view under Protocol~B (full catalog, oracle applicability); Table~\ref{tab:selection_strategy} is the operating-stack diagnostic under Protocol~A.

\begin{table*}[t]
\centering
\caption{\textbf{Protocol~A (operating-stack diagnostic):} 24 proxy rules with predicted applicability $\hat{a}_r$. Selector comparison on a fixed candidate set (43{,}219 scenarios). selADE/selFDE are selected-trajectory errors; S+L = Safety~$\cup$~Legal; Total counts scenarios with $\geq 1$ active-rule violation. Legal and Road are 0.00\% because the learned applicability head suppresses those tiers (a head-calibration artifact, not a selector property); the headline view is Table~\ref{tab:per_tier_oracle} under Protocol~B. Weighted-sum and lexicographic have identical binary compliance; the selADE gap arises because compliance-equivalent all-zero-score cases are resolved by RECTOR's confidence tiebreaker vs.\ the weighted-sum scalar $\arg\min$.}
\label{tab:selection_strategy}
\setlength{\tabcolsep}{4pt}
\small
\begin{tabular}{lcccccccc}
\toprule
\textbf{Strategy} & \textbf{selADE} & \textbf{selFDE} & \textbf{Safety} & \textbf{Legal} & \textbf{Road} & \textbf{Comfort} & \textbf{S+L} & \textbf{Total} \\
 & \textbf{(m)} & \textbf{(m)} & \textbf{(\%)} & \textbf{(\%)} & \textbf{(\%)} & \textbf{(\%)} & \textbf{(\%)} & \textbf{(\%)}\\
\midrule
Confidence Only & 2.953 & 6.270 & 68.39 & 0.00 & 0.00 & 3.43 & 68.39 & 69.96\\
Weighted Sum & 2.33 & 4.941 & 56.07 & 0.00 & 0.00 & 2.68 & 56.07 & 57.99\\
RECTOR (lex.) & 2.74 & 5.729 & \textbf{56.07} & \textbf{0.00} & \textbf{0.00} & \textbf{2.67} & \textbf{56.07} & \textbf{57.99}\\
\bottomrule
\end{tabular}
\end{table*}

\begin{figure}[!t]
\centering
\includegraphics[width=0.85\columnwidth]{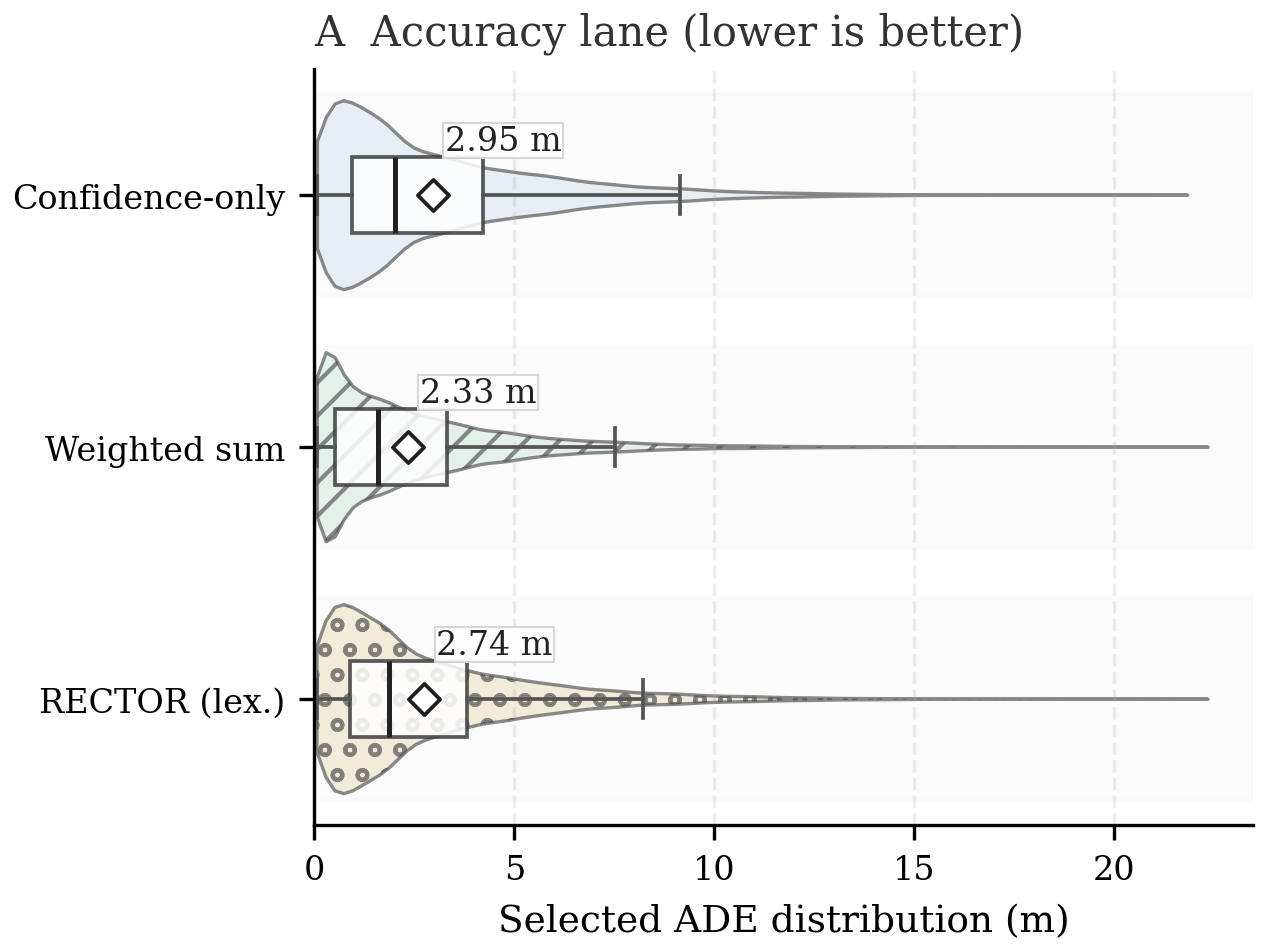}
\caption{Selected-trajectory accuracy (selADE) for the three selectors on the same WOMD \texttt{validation\_interactive} candidate set. Violin+box overlays summarize distribution and spread; mean selADE is 2.95\,m (Confidence-only), 2.33\,m (Weighted sum), and 2.74\,m (RECTOR lexicographic).}
\vspace{-10pt}
\label{fig:selector_accuracy}
\end{figure}

Figure~\ref{fig:selector_accuracy} and Table~\ref{tab:selection_strategy} summarize the operating-stack diagnostic. \textbf{Under Protocol~A, rule-aware selection cuts reported Safety violations by 12.32~pp, from 68.39\% (Confidence-only) to 56.07\%.} These Protocol~A absolute rates are inflated by Safety over-activation in the learned head and suppression of Legal/Road; the Protocol~B oracle baseline is 27.94\%. We therefore read Protocol~A as the delta induced by rule-aware reranking inside the deployed proxy configuration, not as a headline absolute. The deployed-stack value here is rule-aware over confidence-only; the structural separation between lexicographic and weighted-sum is addressed in Secs.~\ref{subsubsec:cannot_vs_does_not} and~\ref{subsubsec:corruption}.

Selected-trajectory endpoint error follows the selADE pattern: under Protocol~A, both rule-aware selectors improve selFDE over confidence-only (5.729\,m / 4.941\,m vs.\ 6.270\,m). The residual gap between lexicographic and weighted-sum stems from compliance-equivalent cases (a majority at $K{=}6$): when all four tier scores are zero, the lexicographic selector falls through to its confidence tiebreaker (Def.~\ref{def:lex_selection}, Step~3) while weighted-sum uses its own $\arg\min$.

Paired tests support the same conclusion. McNemar gives $p \approx 0$ for lexicographic vs.\ confidence-only on Safety (5{,}326 discordant pairs, all favoring rule-aware) and Total (5{,}175). Wilcoxon on selADE gives $p{=}3.95{\times}10^{-85}$ with a 0.216\,m mean improvement. Lex vs.\ WS gives McNemar $p{=}1.0$ (identical binary outcomes under Protocol~A); Wilcoxon on selADE favors WS (mean $\Delta$ 0.407\,m). A 250-config weight grid confirms WS matches lex binary compliance whenever $w_S \ge 5$; only fully flat weights produce ten extra Safety-violating scenarios ($+$0.023~pp). $p$-values are paired-instance on the augmented 43{,}219 set; multiple instances may share a base scenario.

\subsubsection{Cannot vs.\ Does Not: Why WS\,=\,Lex on This Benchmark}
\label{subsubsec:cannot_vs_does_not}

WS and lex produce equal per-tier binary compliance not because the selectors are equivalent but because WOMD rarely exposes cross-tier conflicts: only \textbf{0.40\% (171 scenarios)} contain a candidate that is strictly Pareto-conflicting across tiers. On the remaining 99.60\%, candidates are either fully zero-score (the lex confidence tiebreak fires) or one candidate dominates on every tier (any monotone scalarization picks the same one), so binary compliance matches by construction. This is a \emph{does-not}, not a \emph{cannot}: the lex guarantee is that \emph{no} weighted-sum reparameterization can trade higher-priority for lower-priority violations within $\varepsilon$, and the weight-grid sweep confirms matching requires $w_S \ge 5$. Distributions with denser cross-tier conflicts (heavier intersections, more frequent emergency yields) would empirically separate the two; reporting on such distributions is future work.

\subsubsection{Adversarial Confidence Corruption}
\label{subsubsec:corruption}

The single clearest separation between confidence-based and rule-aware selection appears under adversarial confidence corruption. We perturb the candidate set by injecting a Safety-violating mode and forcing its confidence to exceed that of all alternatives. Across three injection families (collision-prone, off-road, signal-violating), confidence-only picks the injected violator in \textbf{100\%} of scenarios; both rule-aware selectors reject it in \textbf{$\sim$96\%}. Combined with the structural lex guarantee (Sec.~\ref{subsubsec:cannot_vs_does_not}), this is robustness no scalar recalibration can match.

\begin{figure}[!t]
\centering
\includegraphics[width=\columnwidth]{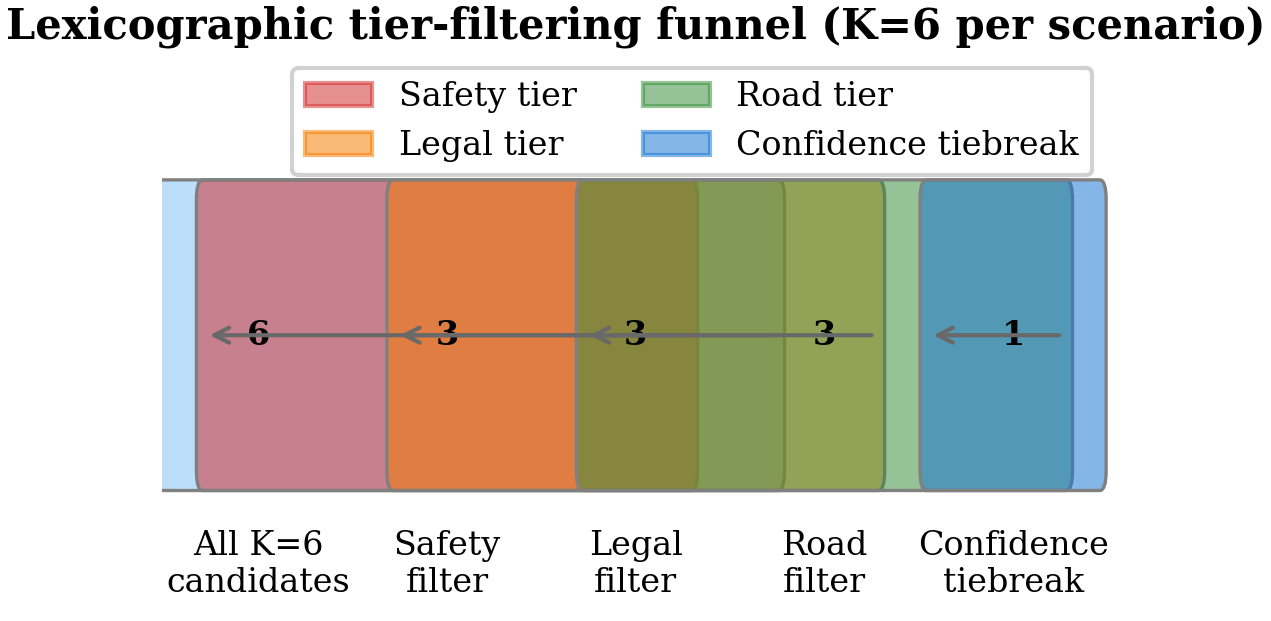}
\caption{Lexicographic tier-filtering funnel on WOMD \texttt{validation\_interactive} (43{,}219 scenarios). From $K{=}6$, each tier keeps candidates within $\varepsilon_\ell$ of the minimum tier score. The mean survivors are 5.77 (Tier~0), 5.67 (Tier~1), 5.67 (Tier~2), and 2.84 (Tier~3); then confidence selects one. The Tier~3 drop indicates Comfort is the main discriminator when Safety scores tie (96.8\% of scenarios have at least one $S_0{=}0$ candidate).}
\vspace{-10pt}
\label{fig:tier_funnel}
\end{figure}

Both rule-aware strategies also improve geometric accuracy versus confidence-only (selADE: 2.953\,m\,$\to$\,2.33\,m for WS, 2.74\,m for RECTOR), since safety-violating trajectories tend to be less accurate. Table~\ref{tab:per_tier_oracle} gives the corresponding Protocol~B oracle-applicability view on the same 43{,}219 scenarios. Under oracle applicability, all four tiers are non-zero, and rule-aware selection reduces violations across each tier relative to confidence-only.

\begin{table*}[t]
\centering
\caption{\textbf{Protocol~B (primary headline view):} full 28-rule proxy catalog with oracle applicability $a_r^*$. Selector comparison on the same 43{,}219-instance candidate set. Relative to confidence-only, both rule-aware selectors reduce violations across all tiers: Safety 8.15~pp, Legal 0.07~pp, Road 0.04~pp, Comfort about 2.4~pp, S+L 8.16~pp, Total 7.91~pp. Weighted-sum and lexicographic are equal at reported precision on binary per-tier compliance because cross-tier conflicts are rare in WOMD \texttt{validation\_interactive} (\emph{does not}, not \emph{cannot}; see Sec.~\ref{subsubsec:cannot_vs_does_not}); the structural lexicographic guarantee remains a separate property of the selector.}
\label{tab:per_tier_oracle}
\setlength{\tabcolsep}{4pt}
\small
\begin{tabular}{lcccccccc}
\toprule
\textbf{Strategy} & \textbf{selADE} & \textbf{selFDE} & \textbf{Safety} & \textbf{Legal} & \textbf{Road} & \textbf{Comfort} & \textbf{S+L} & \textbf{Total} \\
 & \textbf{(m)} & \textbf{(m)} & \textbf{(\%)} & \textbf{(\%)} & \textbf{(\%)} & \textbf{(\%)} & \textbf{(\%)} & \textbf{(\%)} \\
\midrule
Confidence Only & 2.953 & 6.270 & 27.94 & 0.82 & 1.60 & 14.80 & 28.58 & 40.32 \\
Weighted Sum & 2.370 & 5.060 & 19.79 & 0.76 & 1.55 & 12.40 & 20.42 & 32.41 \\
RECTOR (lex.) & 2.875 & 6.063 & \textbf{19.79} & \textbf{0.76} & \textbf{1.55} & \textbf{12.39} & \textbf{20.42} & \textbf{32.41} \\
\bottomrule
\end{tabular}
\vspace{1mm}
\end{table*}

\begin{figure}[!t]
\centering
\includegraphics[width=\columnwidth]{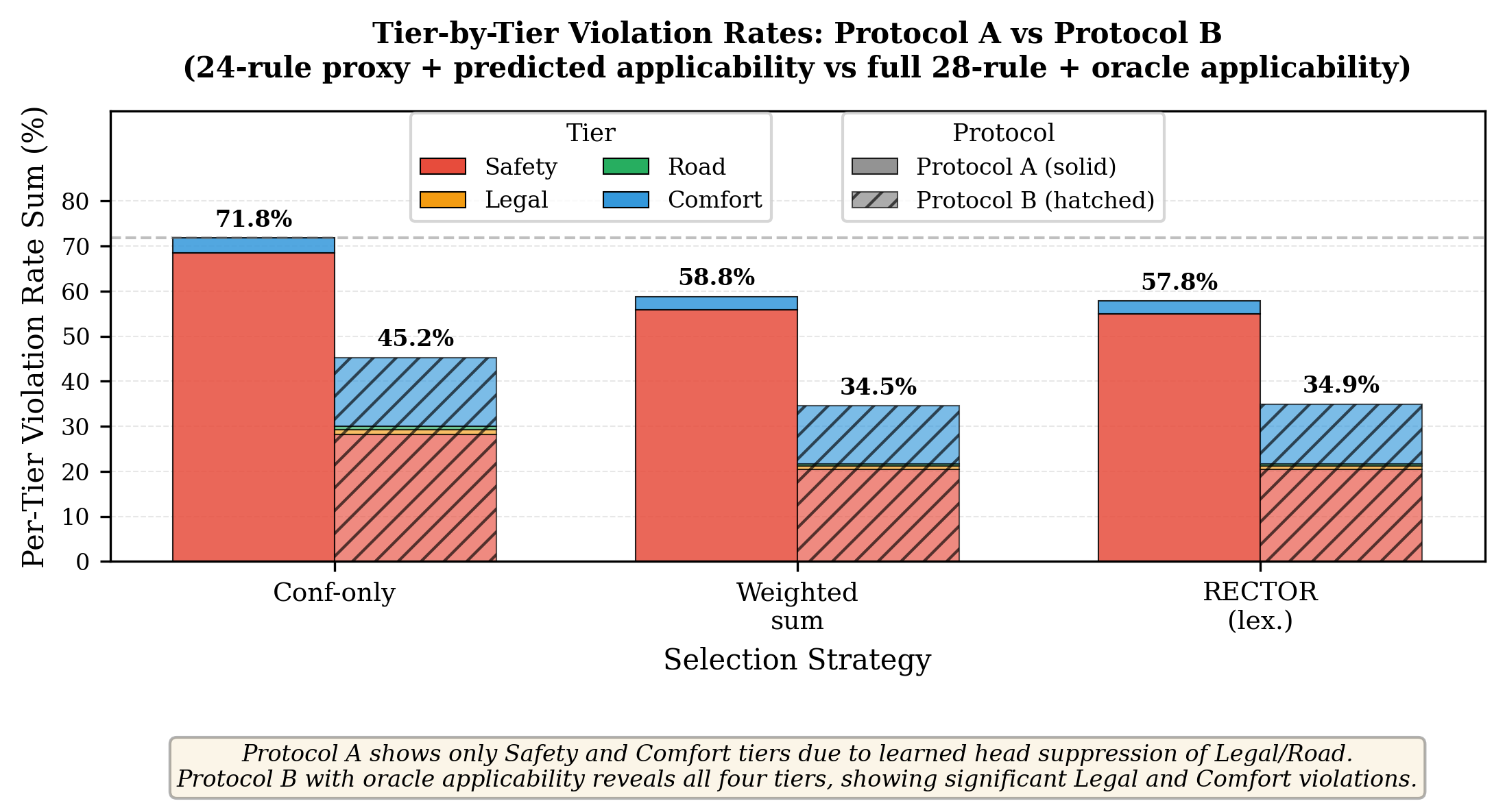}
\caption{Tier-wise violation rates for Protocol~A (solid; 24-rule subset with learned applicability) and Protocol~B (hatched; full 28-rule catalog with oracle applicability) across three selectors on 43{,}219 augmented validation instances. Bar height is the \emph{sum over tiers} (Safety+Legal+Road+Comfort), which is higher than Total Viol.\% in Tables~\ref{tab:selection_strategy} and~\ref{tab:per_tier_oracle} because Total counts the union over tiers. Protocol~A suppresses Legal and Road, while Protocol~B reveals all tiers. Per-tier sums are labeled above bars.}
\vspace{-5pt}
\label{fig:violation_stacked_bar}
\end{figure}

Per-rule heatmaps for Protocol~B are included in supplementary material; the main-text conclusions are supported by the aggregate and per-tier results in Tables~\ref{tab:selection_strategy} and~\ref{tab:per_tier_oracle}.

%-------------------------------------------------------------------------------
\subsection{Ablation Studies}
\label{subsec:ablations_main}
%-------------------------------------------------------------------------------

This subsection reports ablations under Protocol~B (full 28-rule proxy stack, oracle applicability) to isolate each architectural component with non-zero violations across all tiers.

To diagnose the learned applicability head, we analyze its per-rule F1 on the full 43{,}219-scenario split: F1 correlates strongly with oracle support ($\rho{=}0.981$), with rules having $>$5{,}000 positives reaching mean $\text{F}_1{=}0.77$ and low-support rules collapsing to $\text{F}_1{=}0.00$. Figure~\ref{fig:f1_vs_support} visualizes this; the head has severe false-negative bias for the 10 never-invoked rules. Table~\ref{tab:applicability_ablation} quantifies the impact: oracle applicability cuts Safety violations from 56.07\% to 19.79\% and Total from 57.99\% to 32.41\%, while restoring non-zero Legal and Road rates.

\begin{table}[t]
\centering
\caption{Applicability-source ablation (oracle vs. learned) under lexicographic selection on 43{,}219 scenarios. Oracle applicability changes selected trajectories, reducing Safety from 56.07\% to 19.79\% and Total from 57.99\% to 32.41\%, while re-exposing non-zero Legal and Road rates suppressed by the learned head.}
\label{tab:applicability_ablation}
\setlength{\tabcolsep}{3pt}
\begin{tabular}{lccccc}
\toprule
\textbf{Applicability} & \textbf{Total} & \textbf{Safety} & \textbf{Legal} & \textbf{Road} & \textbf{Comfort} \\
\textbf{source} & \textbf{(\%)} & \textbf{(\%)} & \textbf{(\%)} & \textbf{(\%)} & \textbf{(\%)} \\
\midrule
Learned ($\hat{a}_r$)  & 57.99 & 56.07 & 0.00 & 0.00 & 2.67 \\
Oracle ($a_r^*$)       & \textbf{32.41} & \textbf{19.79} & 0.757 & 1.55 & 12.39 \\
\bottomrule
\end{tabular}
\end{table}

\begin{figure}[!t]
\centering
\includegraphics[width=\columnwidth]{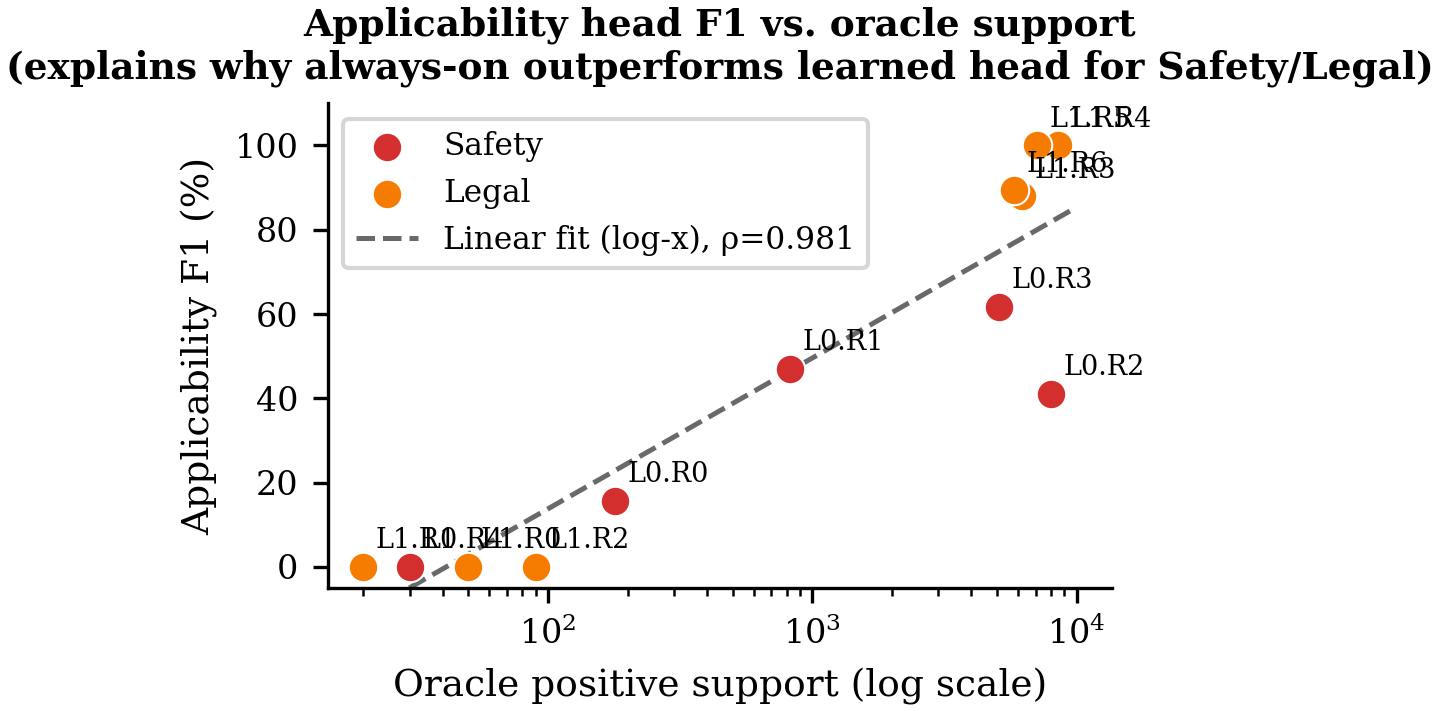}
\caption{Learned applicability-head F1 versus oracle support for all 28 rules (colored by tier). On the full evaluation set, F1 strongly correlates with support ($\rho{=}0.981$): it collapses near zero below 100 positives and typically exceeds 0.7 above 5{,}000 positives. The 10 never-invoked rules have zero support and F1$=0.0$, which motivates conservative, always-on activation for the Safety and Legal tiers.}
\vspace{-10pt}
\label{fig:f1_vs_support}
\end{figure}

\subsubsection{Applicability Learning: A Negative Result}
\label{subsubsec:applicability_negative}

Table~\ref{tab:conservative_applicability} reports the paper's clearest negative result for the learned applicability head. We compare four activation policies: the learned 3.33\,M-parameter head, a hybrid that forces Tier~0/1 (Safety, Legal) always-on (learned for Tier~2/3), a fully always-on policy, and the oracle. Each mode is used for \emph{selection}, while violations are scored under \emph{oracle} applicability (cross-evaluation), isolating the activation policy's effect on trajectory choice.

\begin{table}[t]
\centering
\caption{Conservative Applicability Comparison (Cross-Evaluation: Mode-Specific Selection, Oracle Evaluation, Lexicographic, 43{,}219 Scenarios)}
\label{tab:conservative_applicability}
\setlength{\tabcolsep}{3pt}
\begin{tabular}{lccccc}
\toprule
\textbf{Applicability mode} & \textbf{selADE} & \textbf{Safety} & \textbf{Legal} & \textbf{S+L} & \textbf{Total} \\
 & \textbf{(m)} & \textbf{(\%)} & \textbf{(\%)} & \textbf{(\%)} & \textbf{(\%)} \\
\midrule
Learned ($\hat{a}_r$) & 2.74 & 19.99 & 0.84 & 20.69 & 33.95 \\
Hybrid conservative\rlap{$^{\dagger}$} & 2.850 & 19.88 & 0.79 & 20.53 & 33.75 \\
Always-on (all rules) & 2.481 & 19.88 & 0.79 & 20.53 & 33.50 \\
Oracle ($a_r^*$) & 2.875 & 19.79 & 0.76 & 20.42 & 32.41 \\
\bottomrule
\end{tabular}
\vspace{1mm}
\caption*{\footnotesize $^{\dagger}$Hybrid conservative: always-on for Tier~0 (Safety) and Tier~1 (Legal), learned predictions for Tier~2 (Road) and Tier~3 (Comfort).  Each row uses the specified applicability mode for selection; then all rows are evaluated under Oracle applicability (full 28 rules).  Relative to the learned head, the conservative policies slightly reduce Safety and S+L violations and reduce selected-trajectory error by up to 9.3\% (2.74\,m\,$\to$\,2.481\,m); always-on also yields the lowest selADE among the conservative policies.  Oracle selection yields the lowest Total violations (32.41\%).}
\vspace{-15pt}
\end{table}

Always-on \emph{improves} selADE from 2.74\,m to 2.481\,m while slightly reducing Safety-tier violations (19.99\%\,$\to$\,19.88\%): the parameter-free policy is materially more accurate and at least as compliant as the learned head under cross-evaluation. We therefore recommend \textbf{always-on Safety/Legal as the default RECTOR configuration} and treat the learned head as a tested-but-not-recommended ablation. The negative result has two readings. Operationally, 3.33\,M parameters and Stage~1 focal-BCE training did not yield a calibrated applicability head on this task at this data scale; the under-supported rules (10 with zero positives, 14 with $<$5{,}000) drive the head toward false negatives that strictly worsen downstream selection. Methodologically, when the supervisory signal is dominated by low-prevalence rules, an always-on (or hand-specified) activation policy can dominate a learned one with no parameters and no training cost---a useful baseline for future rule-applicability research.

RECTOR can improve compliance only when the candidate set contains a relatively safe alternative. At $K{=}6$, $\sim$\textbf{3.2\%} of scenarios have no candidate with $S_0{=}0$; in those, RECTOR returns the least-violating mode and emits the \texttt{infeasible} flag from Remark~\ref{thm:infeasibility}. Downstream policies (conservative deceleration, planner replan, emergency-brake fallback) are deployment decisions, summarized in Section~\ref{Sec:Conclusion}. Under Protocol~B with always-on Safety/Legal and oracle reporting, 19.88\% of scenarios remain Safety-violating; 99.53\% of this floor is infeasibility-limited and only 0.47\% is selector misranking. Generator quality and candidate diversity therefore bound achievable compliance.

\paragraph*{Contribution of audit-only rules to the Protocol B floor.} Four rules in the Protocol~B catalog (L1.R1 priority / right-of-way, L3.R6 parking-zone violation, L3.R7 school-zone speed compliance, L3.R8 construction-zone compliance; see Table~\ref{tab:rule_catalog}) are evaluated by the oracle auditor but lack a differentiable proxy: they enter the Protocol~B total as binary auditor checks and cannot be optimized at selection time. They account for \textbf{2.6~pp} of the 32.41\% Protocol~B Total floor, primarily through L3.R8 (construction-zone compliance, $\sim$1.7~pp), which is purely a candidate-set property: no rule-aware reranking can correct it on a fixed candidate set. The remaining 29.8~pp Total floor is split between the 24-proxy-addressable rules (where the selector reduces violations by 7.91~pp vs.\ confidence-only) and the L0.R3 proxy false-negative gap (Sec.~\ref{Sec:Section_VIII_Verification_Invariants_Numerical_Stability_and_Integration_Tests}). Closing the audit-only gap requires either adding proxies for those rules or increasing generator diversity to include compliant candidates.

These gains are supported by the verification pipeline in Section~\ref{Sec:Section_VIII_Verification_Invariants_Numerical_Stability_and_Integration_Tests}: invariants pass at 99.2--100\%, post-mitigation NaN/Inf is 0.0\%, proxy--full-evaluator pairwise concordance is \textbf{0.875 on 10 variance-bearing rules} (0.948 overall including 14 trivially concordant zero-variance rules), and sensitivity is bounded under perception (${\leq}6.1$~pp) and map (${\leq}12.1$~pp) perturbations.

%-------------------------------------------------------------------------------
\subsection{Waymax Bridge: Infrastructure Verification Only}
\label{subsec:closed_loop}
%-------------------------------------------------------------------------------

All primary results above are open-loop. We verified that the RECTOR pipeline executes selected trajectories correctly through Waymax~\cite{gulino2024waymax} (\texttt{DeltaGlobal} dynamics, 10\,Hz, 55 WOMD scenarios, 80 steps each, replan every 2.0\,s, 3{,}338.5\,s end-to-end wall time). Because we use a \emph{MockLogReplayGenerator} that returns the logged trajectory as $K{=}6$ near-identical candidates ($\sigma{=}0.001$), all three selectors produce identical selections and zero overlap steps; this run is therefore an integration test of the bridge, not a selector comparison and not a reactive closed-loop study. Figure~\ref{fig:scenario_gallery} in Section~\ref{subsec:dataset} illustrates the bridge execution and is similarly non-differentiating among selectors. Reactive closed-loop evaluation with a generator that yields non-trivial candidate diversity is deliberate future work (Section~\ref{Sec:Conclusion}).

\section{Verification, Invariants, Numerical Stability, and Integration Tests}
\label{Sec:Section_VIII_Verification_Invariants_Numerical_Stability_and_Integration_Tests}

All compliance figures in this paper refer to \emph{proxy-evaluator compliance} under the Protocol~A/B stacks in Section~\ref{subsec:eval_protocols}. They are not a formal safety certificate against ego-vehicle dynamics, the deployment ODD, or any jurisdiction's traffic code. Protocol~B (full proxy catalog, oracle applicability) is the stricter in-paper reference; Protocol~A (proxy-24, predicted applicability) is the operating selection signal. Both are necessary; neither is a safety certificate.

The compliance gains in Section~\ref{Sec:Accuracy_SectionRule-Compliance_Behavior_and_Efficiency} are meaningful only if the evaluator is semantically correct, numerically stable, and reproducible. This section verifies the rule-evaluation and selection pipeline at the invariant, numerical, and integration levels, then stress-tests it under perturbations and edge cases.

We verify four invariants required by tier aggregation: non-negativity of violation scores (100\%), applicability dominance so inactive rules contribute zero (99.2\% post-mitigation, with residual boundary-adjacent map cases logged for regression), temporal consistency of per-timestep aggregation (100\%), and monotonicity of threshold-based rules (100\%). Table~\ref{tab:semantic_invariants} reports these together with end-to-end integration checks.

\begin{table*}
\centering
\caption{Semantic Invariant and Integration Test Pass Rates}
\label{tab:semantic_invariants}
\small
\begin{tabular}{lllcc}
\toprule
\textbf{Layer} & \textbf{Test / Invariant} & \textbf{Statement / Notes} & \textbf{Enforcement} & \textbf{Result} \\
\midrule
\multirow{4}{*}{Semantic}
& Non-negativity & $V_r \geq 0$ for all $r$ & ReLU/clamping & 100\% \\
& Applicability & $a_r{=}0 \Rightarrow a_r V_r{=}0$ & Multiplicative gating & 99.2\% \\
& Temporal consistency & $V_r=\mathrm{agg}_t V_r^{(t)}$ & Bounds + unit tests & 100\% \\
& Monotonicity & $\theta_1{>}\theta_2 \Rightarrow V_1{\geq} V_2$ & parameterized tests & 100\% \\
\midrule
\multirow{6}{*}{Integration}
& Tier aggregation & All tier sums match expected gated sums & End-to-end & PASS \\
& Lexicographic selection & Verified across 100 selection instances & Sampling & PASS \\
& Proxy/full agreement & On rules with both implementations & Binary comparison & 87.3\% \\
& Applicability correctness & Boundary-adjacent map edge cases & Regression & 99.2\% \\
& NaN/Inf propagation & Zero invalid values in tested runs & Monitoring & PASS \\
& Monotonicity regression & All thresholded rules preserve order & Regression suite & PASS \\
\bottomrule
\end{tabular}
\end{table*}

Distance computations, soft overlap proxies, and kinematic derivatives are protected with $\epsilon$-regularization ($\epsilon{=}10^{-6}$), degenerate-geometry checks, windowed smoothing ($W{=}3$), and finite-value bounds. After mitigation, the overall NaN/Inf rate is \textbf{0.0\%}. Tier scores consumed by the selector are well-defined; Table~\ref{tab:numerical_stability} lists pre-mitigation error modes and safeguards.

\begin{table*}
\centering
\caption{Numerical Stability Analysis}
\label{tab:numerical_stability}
\begin{tabular}{llcc}
\toprule
\textbf{Computation} & \textbf{Error Mode} & \textbf{Frequency} & \textbf{Mitigation} \\
\midrule
Point-to-segment distance & Degenerate segments & 0.3\% & Skip $<\!10^{-4}$\,m \\
Point-to-polyline queries & NaN from ill-conditioned geometry & 0.1\% & Bounds + finiteness checks \\
Soft overlap proxy & Extreme/invalid boxes & 0.05\% & Dimension checks + clamp \\
Acceleration & Velocity discontinuities & 0.8\% & Windowing + bounds \\
Jerk & Transient spikes & 1.2\% & Smoothing + thresholds \\
\midrule
\textbf{Overall NaN/Inf rate} & After mitigation & \textbf{0.0\%} & All caught + handled \\
\bottomrule
\end{tabular}
\end{table*}

Tier aggregation correctness and lexicographic determinism are verified on sampled scenarios (all pass). The full per-rule proxy--evaluator alignment table for all 24 proxied rules (1{,}000 scenarios) is in supplementary material.

A separation appears between per-rule severity correlation and candidate-ranking fidelity. Mean per-rule Spearman $\rho{=}0.25$ is low (Safety-distance proxies remain weak: L0.R0 $\rho{=}{-}0.07$, L0.R1 $\rho{=}{-}0.38$); kinematic and interaction rules align better (L3.R10 $\rho{=}0.62$, L1.R2 $\rho{=}0.77$). The operationally relevant metric is \emph{pairwise ranking accuracy}: across all $\binom{K}{2}{=}15$ pairs per scenario, mean concordance on the 10 variance-bearing rules is \textbf{0.875} (range 0.67--0.98; 0.948 averaged over all 24, including 14 zero-variance rules at trivial 1.0). Low per-rule $\rho$ coexists with high concordance because the lex selector uses summed tier scores, and aggregation cancels per-rule noise.

The collision-avoidance proxy (L0.R3) warrants explicit qualification. It reaches 0.80 pairwise concordance but flags only 19.3\% of full-evaluator-positive cases (FNR\,=\,80.7\%) at 0.0\% FPR. The high FNR is structurally consistent with the SAT-overlap formulation, which fires only on bounding-box penetration while the full evaluator counts a broader class of contact events. The gap is asymmetric (under-detection, not over-detection), so the lex selector uses L0.R3 as a \emph{sufficient condition for rejection} and never wrongly demotes a Safety-compliant candidate. Two consequences: Protocol~A/B Safety numbers under-count L0.R3 violations relative to a full-evaluator pass, and any deployment-facing reading should add a post-selection full-evaluator audit ($\sim$5\,ms/sample); always-on activation (Sec.~\ref{subsubsec:applicability_negative}) is recommended precisely because it preserves L0.R3 evaluation on every scenario. The proxy Safety numbers in Sec.~\ref{Sec:Accuracy_SectionRule-Compliance_Behavior_and_Efficiency} are not a safety certificate.

\begin{figure}[!t]
\centering
\includegraphics[width=\columnwidth]{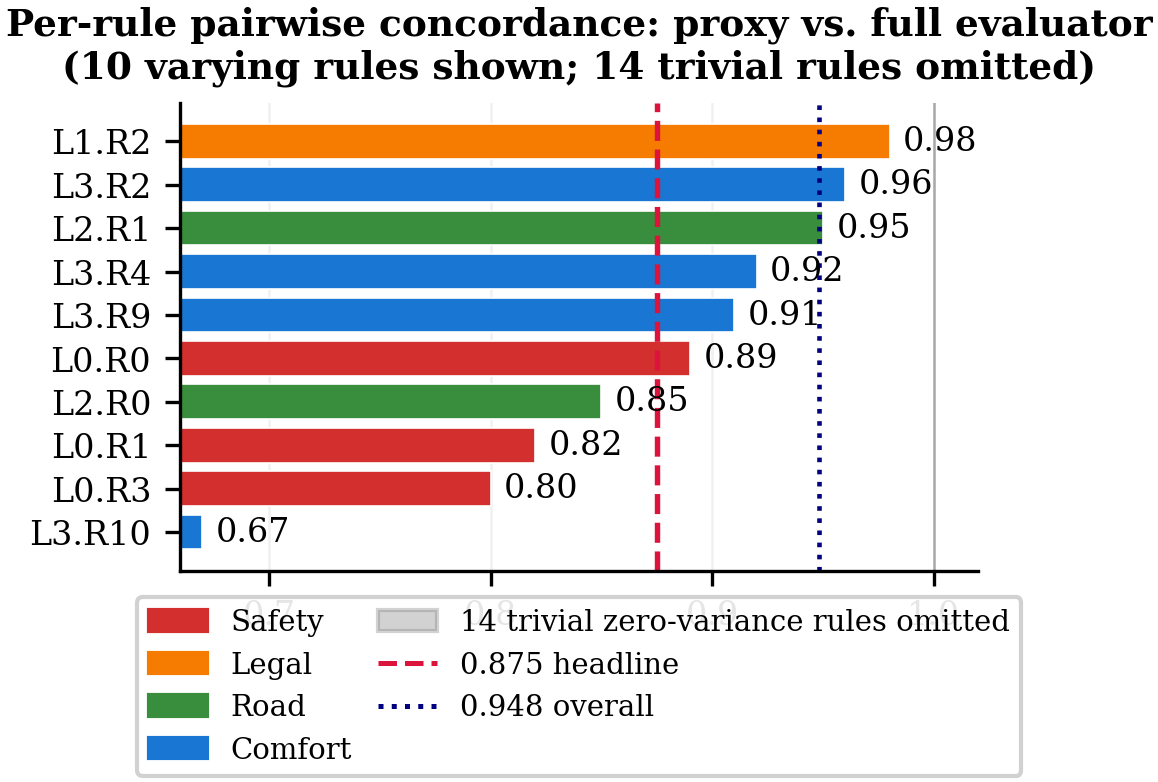}
\caption{Per-rule pairwise concordance between differentiable proxies and the full \texttt{waymo\_rule\_eval} evaluator (1{,}000 scenarios). The 10 variance-bearing rules are shown (sorted); 14 zero-variance rules are omitted as trivially concordant (1.0). Dashed lines mark mean concordance: \textbf{0.875} (variance rules) and \textbf{0.948} (all 24 rules). L0.R3 and L3.R10 are the weakest-discriminating proxies; the others exceed 0.80.}
\label{fig:concordance_bar}
\vspace{-10pt}
\end{figure}

\begin{figure}[!t]
\centering
\begin{subfigure}[t]{0.49\columnwidth}
\centering
\includegraphics[width=\textwidth]{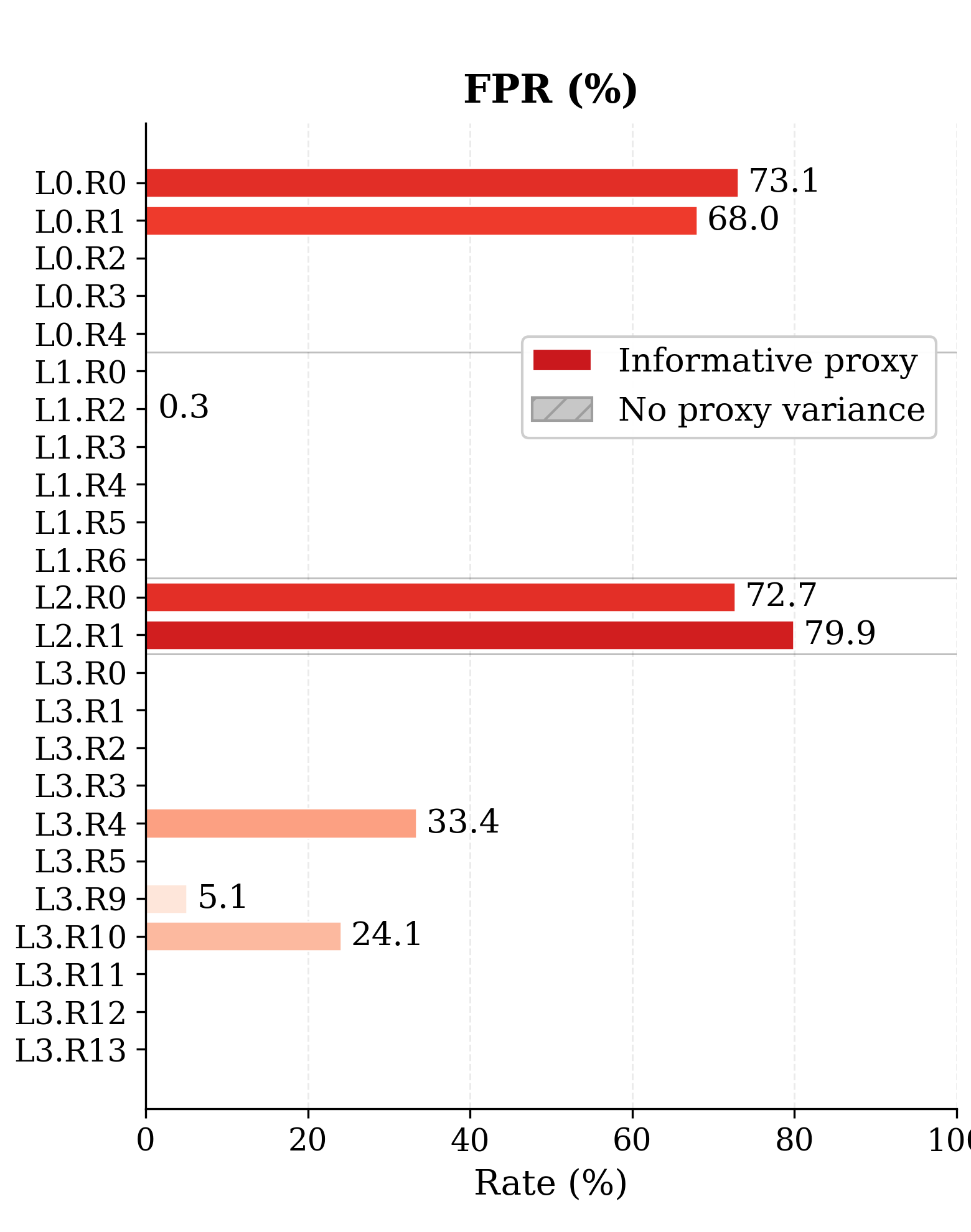}
\caption{False-positive rate (FPR)}
\end{subfigure}
\hfill
\begin{subfigure}[t]{0.49\columnwidth}
\centering
\includegraphics[width=\textwidth]{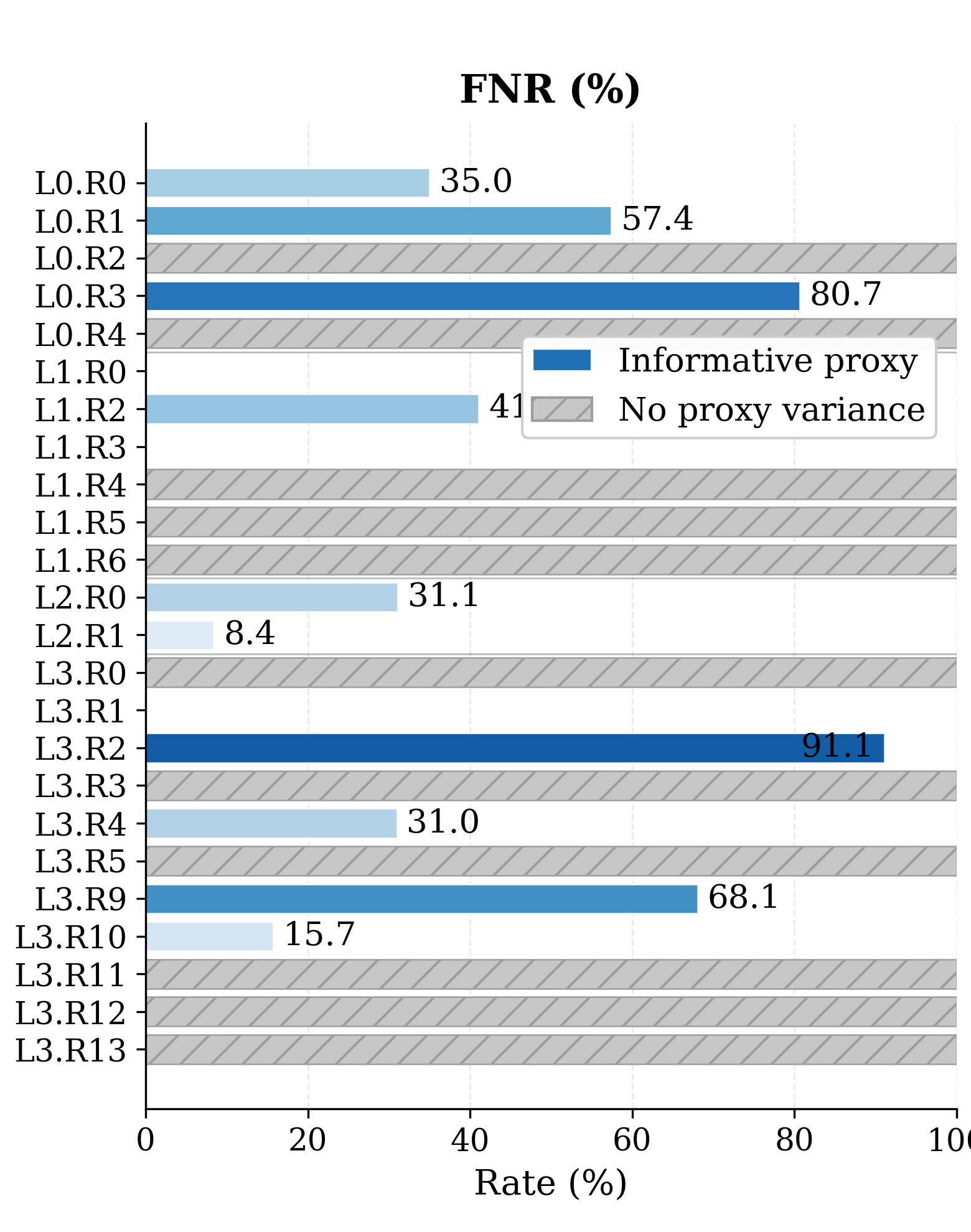}
\caption{False-negative rate (FNR)}
\end{subfigure}
\caption{Proxy reliability across all 24 proxied rules: false-positive rate (FPR) and false-negative rate (FNR), grouped by tier. Color encodes 0--100\%, and zero-variance rules are gray. L0.R3 (collision avoidance) shows 80.7\% FNR, and several Legal rules show 100\% FNR due to Protocol~A learned-applicability suppression. These high-FNR cases motivate always-on activation (Section~\ref{subsec:ablations_main}).}
\vspace{-5pt}
\label{fig:proxy_reliability}
\end{figure}

\begin{table}
\centering
\caption{Rule Evaluation Robustness to Perception and Map Perturbations (500 Scenarios)}
\label{tab:robustness_perturbations}
\setlength{\tabcolsep}{4pt} 
\begin{tabular}{c|lllcl}
\toprule
 & \makecell{\textbf{Input} / \\ \textbf{Feature}} & \makecell{\textbf{Perturbation} / \\ \textbf{Error}} & \textbf{Rule} & \textbf{Change} & \textbf{Sensitivity} \\
\midrule
\multirow{6}{*}{\rotatebox[origin=c]{90}{Perception}}
& Agent position    & 0.1--0.5\,m         & L0.R1 & $+2.1\%$ & Moderate \\
& Agent position    & 0.1--0.5\,m         & L0.R3 & $+1.8\%$ & Low \\
& Agent velocity    & 0.5--2.0\,m/s       & L0.R0  & $+3.2\%$ & Moderate \\
& Agent heading     & 2--10\textdegree    & L0.R1 & $+1.5\%$ & Low \\
& Lane centreline   & 0.1--0.3\,m         & L2.R1      & $+4.7\%$ & High \\
& Signal state      & 1--5\% flips        & L1.R0 & $+6.1\%$ & High \\
\midrule
\multirow{4}{*}{\rotatebox[origin=c]{90}{Map}}
& Lane boundaries   & Missing (20\%)      & L2.R1     & $+8.3\%$  & High \\
& Traffic signals   & Wrong (5\%)   & L1.R0 & $+12.1\%$ & High \\
& Stop signs        & Missing (10\%)      & L1.R4 & $+2.1\%$  & Low \\
& Lane centreline   & Drift (0.5\,m)      & L2.R1 & $+6.7\%$  & High \\
\bottomrule
\end{tabular}
\end{table}

To assess sensitivity to upstream errors, we inject controlled perturbations into 500 scenarios. Perception noise (position 0.1--0.5\,m; velocity 0.5--2.0\,m/s; heading 2--10\textdegree) increases violation rates by at most 6.1~pp; map errors (missing lane boundaries, wrong signal states) by up to 12.1~pp (Table~\ref{tab:robustness_perturbations}). Signal-state errors and missing lane boundaries are the highest-sensitivity inputs and the highest-priority targets for robustness hardening. A 367-case regression suite (zero-length segments, degenerate bounding boxes, track discontinuities, out-of-map queries) passes at 99.5\%; the 0.5\% residual failures are logged with diagnostic traces and do not propagate to tier scores.

% \section{Near-Term Priorities and Open Questions}
% \label{Sec:Known_Gaps_and_Next_Steps_Toward_Deployment}
% \subfile{Section_IX_Known_Gaps_and_Next_Steps_Toward_Deployment}

\section{Conclusion}
\label{Sec:Conclusion}
We presented \textsc{RECTOR}, a priority-aware re-ranking layer that selects one trajectory from a fixed multi-modal candidate set under Safety~$\succ$~Legal~$\succ$~Road~$\succ$~Comfort via $\varepsilon$-lexicographic selection over normalized proxy tier scores. From $K{=}6$ Transformer--CVAE candidates, RECTOR runs a scene-conditioned applicability mechanism (retained as a tested reference; the deployment-default is the parameter-free always-on policy for Safety/Legal), 24 differentiable proxies (pairwise concordance with the full evaluator: 0.875 on variance-bearing rules, 0.948 overall, with explicit L0.R3 qualification), and a parameter-free $\varepsilon$-lexicographic scorer. The selector's guarantee is structural priority preservation over the proxy tier scores within tolerance~$\varepsilon$; it is not a control-theoretic or jurisdictional safety certificate.

On 43{,}219 augmented WOMD \texttt{validation\_interactive} instances, rule-aware selection reduces Safety+Legal violations from \textbf{28.58\% to 20.42\%} (8.16~pp) and Total from \textbf{40.32\% to 32.41\%} (7.91~pp) versus confidence-only, with reductions on every tier. Protocol~A (24 rules, learned head) shows a larger headline Total drop, but its inflated baseline reflects Safety over-activation and Legal/Road suppression by the head; we therefore lead with Protocol~B. Weighted-sum matches lexicographic on per-tier binary compliance because cross-tier conflicts are rare in this dataset (\emph{does-not}, not \emph{cannot}); the structural lexicographic guarantee is a separate property no weight calibration can replicate. Under an adversarial protocol that injects a Safety-violating candidate with the maximum confidence, confidence-only selection picks the violator in \textbf{100\%} of scenarios; both rule-aware selectors reject it in \textbf{$\sim$96\%}---the clearest empirical separation between confidence-based and rule-aware selection. An ablation also shows that a parameter-free always-on policy Pareto-dominates the learned 3.33\,M-parameter applicability head on cross-evaluated compliance (S+L 20.53\% vs.\ 20.69\%) \emph{and} on selected-trajectory accuracy (selADE 2.481\,m vs.\ 2.737\,m); we therefore recommend always-on Safety/Legal as the default RECTOR configuration.

All compliance numbers are proxy-evaluator results and not a safety certificate; L0.R3 has 80.7\% FNR at 0.0\% FPR against the full Waymo evaluator, so a post-selection full-evaluator audit is required for any deployment-facing reading. The 32.41\% Protocol~B Total floor is dominated by candidate-set limitations: 3.2\% of scenarios contain no candidate with $S_0{=}0$, and audit-only rules (notably L3.R8) contribute $\sim$2.6~pp that no reranking on a fixed candidate set can address. Model selection was on the validation split because the WOMD test split lacks the per-rule applicability labels we require; Protocol~B selector deltas are insulated from learned-applicability bias, but absolute Protocol~A numbers carry an unquantified optimistic bias. Results are open-loop, 5\,s, U.S.\ rules, on the validation split. Four next steps follow directly: (i)~held-out test-split reporting under both protocols (contingent on label-pipeline access); (ii)~broader candidate diversity (mixture-of-experts decoders, generative ensembles) to address the 3.2\% infeasibility floor; (iii)~reactive closed-loop evaluation with a non-trivial Waymax generator (the run reported here is infrastructure verification only); (iv)~full non-differentiable post-selection audit and proxies for the four currently audit-only rules. Latency is $\sim$7.3\,ms per sample (137~samples/s), of which rule evaluation and selection add 1.4\,ms (19\%). Within these limits, RECTOR shows that explicit, priority-ordered selection is a practical and auditable mechanism for translating multimodal diversity into improved evaluator compliance.

\bibliographystyle{./IEEEtran}
\bibliography{references.bib}

\pagebreak

\appendices
%-------------------------------------------------------------------------------
\section{Architecture Specifications}
\label{app:architecture}
%-------------------------------------------------------------------------------

This appendix tabulates the full dimension specifications for each RECTOR module. All values match the released model configuration. Symbol conventions follow Table~\ref{tab:notation}.

%-------------------------------------------------------------------------------
\subsection{Scene Encoder}
\label{app:architecture:encoder}
%-------------------------------------------------------------------------------

The scene encoder maps ego history, neighbor histories, and HD map context to a shared 256-dimensional scene embedding.  Table~\ref{tab:arch_encoder} gives the full specification.

\begin{table*}[h]
\centering
\caption{Scene Encoder Architecture}
\label{tab:arch_encoder}
\small
\begin{tabular}{ll}
\toprule
\textbf{Component / Hyper-parameter} & \textbf{Value} \\
\midrule
\multicolumn{2}{l}{\textit{Input padding and masking}} \\
Max agents ($N_{\text{pad}}$) & 32 \\
Max lane segments ($L_{\text{pad}}$) & 64 \\
Points per lane segment & 20 \\
Ego history length ($T_h$) & 11 steps (1.1\,s at 10\,Hz) \\
State dimension per timestep & 4 ($x, y, \theta, v$) \\
\midrule
\multicolumn{2}{l}{\textit{Per-token encoding}} \\
Ego Conv1D kernel & 3 (causal padding) \\
Agent MLP layers & 2 ($4 \to 64 \to 256$, ReLU) \\
Lane PointNet max-pool & over 20 points per segment \\
Lane MLP layers & 2 ($2 \to 64 \to 256$, ReLU) \\
\midrule
\multicolumn{2}{l}{\textit{Transformer fusion}} \\
Model dimension ($D$) & 256 \\
Number of layers & 3 \\
Attention heads & 8 \\
FFN dimension & 1024 \\
FFN activation & ReLU \\
Dropout & 0.1 \\
Positional encoding & None (token-type embeddings) \\
\midrule
\multicolumn{2}{l}{\textit{Scene pooling}} \\
Pooling mechanism & Cross-attention (learned query $\mathbf{q}_{\text{ego}}$) \\
Output dimension & 256 \\
\midrule
Total parameters & 324,288 \\
\bottomrule
\end{tabular}
\end{table*}

%-------------------------------------------------------------------------------
\subsection{Transformer-CVAE Trajectory Decoder}
\label{app:architecture:decoder}
%-------------------------------------------------------------------------------

The decoder generates $K{=}6$ trajectory candidates over $T{=}50$ timesteps from a shared scene embedding and mode-specific goal queries.  All decoding is single-shot (non-autoregressive): the full 50-step trajectory is produced in one forward pass.  Table~\ref{tab:arch_decoder} details the configuration.

\begin{table*}[h]
\centering
\caption{Trajectory Decoder Architecture}
\label{tab:arch_decoder}
\small
\begin{tabular}{ll}
\toprule
\textbf{Component / Hyper-parameter} & \textbf{Value} \\
\midrule
\multicolumn{2}{l}{\textit{Mode configuration}} \\
Number of modes ($K$) & 6 \\
Planning horizon ($T$) & 50 steps (5.0\,s at 10\,Hz) \\
Output state & $(x, y, \theta, v)$ per timestep \\
Decoding type & Single-shot (non-autoregressive) \\
\midrule
\multicolumn{2}{l}{\textit{Goal query network (GoalNet)}} \\
Input & $\mathbf{e}_{\text{scene}} \in \mathbb{R}^{256}$ + mode query $\mathbf{q}^{(k)} \in \mathbb{R}^{256}$ \\
MLP layers & 2 ($512 \to 256$, GELU) \\
Output & goal embedding $\mathbf{g}_{\text{emb}}^{(k)} \in \mathbb{R}^{256}$ \\
\midrule
\multicolumn{2}{l}{\textit{CVAE latent variable}} \\
Latent dimension ($d_z$) & 64 \\
Prior network & MLP ($256 \to 128 \to 128$, GELU) $\to (\boldsymbol{\mu}, \log\boldsymbol{\sigma}^2)$ \\
Posterior network & MLP ($256{+}256 \to 128 \to 128$, GELU) [training only] \\
Sampling (inference) & $\mathbf{z}^{(k)} \sim \mathcal{N}(\boldsymbol{\mu}_{\text{prior}}, \text{diag}(\boldsymbol{\sigma}^2_{\text{prior}}))$, independent per mode \\
\midrule
\multicolumn{2}{l}{\textit{Transformer decoder}} \\
Input dimension & $256 + 256 + 64 = 576$ (scene + goal + latent) \\
Number of layers & 4 \\
Attention heads & 8 \\
FFN dimension & 1024 \\
FFN activation & GELU \\
Dropout & 0.1 \\
\midrule
\multicolumn{2}{l}{\textit{Output heads}} \\
Regression head & 2 layers ($256 \to 256 \to 4T$, GELU + linear) \\
Trajectory scale & 50.0 (output multiplier) \\
Confidence head & Linear ($256 \to K$) followed by softmax \\
\midrule
Total parameters & 5,182,165 \\
\bottomrule
\end{tabular}
\end{table*}

%-------------------------------------------------------------------------------
\subsection{Applicability Head}
\label{app:architecture:applicability}
%-------------------------------------------------------------------------------

The applicability head uses a TierAwareBlock architecture: each of the $R{=}28$ rules has an independent query vector that attends over the scene embedding.  Table~\ref{tab:arch_applicability} gives the full specification.

\begin{table*}[h]
\centering
\caption{Applicability Head Architecture}
\label{tab:arch_applicability}
\small
\begin{tabular}{ll}
\toprule
\textbf{Component / Hyper-parameter} & \textbf{Value} \\
\midrule
\multicolumn{2}{l}{\textit{TierAwareBlock (one per rule)}} \\
Rule query dimension & 256 (learned, one per rule) \\
Cross-attention heads & 4 \\
Cross-attention input & $\mathbf{q}_r \in \mathbb{R}^{256}$, keys/values from $\mathbf{e}_{\text{scene}}$ \\
\midrule
\multicolumn{2}{l}{\textit{Per-rule MLP}} \\
Layer 1 & $256 \to 384$ (GELU) \\
Layer 2 & $384 \to 1$ (linear) \\
Output activation & Sigmoid \\
\midrule
\multicolumn{2}{l}{\textit{Binarization thresholds}} \\
Safety (Tier~0) & 0.05 \\
Legal (Tier~1) & 0.15 \\
Road (Tier~2) & 0.30 \\
Comfort (Tier~3) & 0.50 \\
\midrule
\multicolumn{2}{l}{\textit{Bias initialization (per tier)}} \\
Safety (Tier~0) & $+2.0$ (strong prior toward active detection) \\
Legal (Tier~1) & $+1.0$ (moderate prior toward active) \\
Road (Tier~2) & $0.0$ (neutral) \\
Comfort (Tier~3) & $-1.0$ (prior toward inactive) \\
\midrule
\multicolumn{2}{l}{\textit{Training}} \\
Loss & Focal binary cross-entropy, $\gamma{=}2.0$ \\
FN penalty multipliers & Safety $8\times$, Legal $4\times$, Road $2\times$, Comfort $1\times$ \\
\midrule
Total parameters (head + scorer) & 3,327,289 \\
\bottomrule
\end{tabular}
\vspace{1mm}
\caption*{\footnotesize The tiered scorer (24 proxy evaluations + lexicographic aggregation) has zero trainable parameters; the 3.33\,M count is entirely the applicability head. The recommended deployment policy uses always-on activation ($\hat{a}_r{=}1$ for all $r$) for Safety and Legal tiers, bypassing the head for those rules; the head is used only for Road and Comfort tiers where false-positive activation incurs an accuracy cost (see Section~\ref{Sec:Accuracy_SectionRule-Compliance_Behavior_and_Efficiency}).}
\end{table*}

%-------------------------------------------------------------------------------
\section{Differentiable Proxy Symbolic Forms}
\label{app:proxies}
%-------------------------------------------------------------------------------

This appendix provides the exact functional form, kinematic variables, softness parameter $\kappa_r$, and activation condition for each of the 24~differentiable proxy rules used in RECTOR.  All proxies produce a raw severity $V_r(\tau;s) \ge 0$ that is normalized via the exponential cost map (Eq.~\ref{eq:exponential_cost}); the per-rule $\kappa_r$ and violation threshold appear in Table~\ref{tab:proxy_constants}.  Proxies are evaluated at 10\,Hz; higher-order kinematics (acceleration, jerk) are computed by finite differences with windowed smoothing ($W{=}3$).  Rules are grouped by tier to match the priority structure.

%-------------------------------------------------------------------------------
\subsection*{Tier 0 — Safety}
%-------------------------------------------------------------------------------

\noindent\textbf{L0.R0 — Safe longitudinal distance.}
Let $d_t^{\text{long}}$ be the front-bumper-to-front-bumper distance to the nearest lead vehicle in the same lane at timestep~$t$, and $v_t$ the ego speed.  The required headway is $d_t^{\text{req}} = v_t \cdot \Delta_{\text{gap}}$ with $\Delta_{\text{gap}}{=}2.0$\,s.  The proxy severity is:
\begin{equation}
V_{\text{L0.R0}} = \sum_{t=1}^{T} \max\!\bigl(0,\, d_t^{\text{req}} - d_t^{\text{long}}\bigr).
\end{equation}
\textit{Activation:} lead vehicle detected, $v_t \ge 0.3$\,m/s.

\medskip
\noindent\textbf{L0.R1 — Safe lateral clearance.}
Let $c_t^{j}$ be the edge-to-edge lateral distance between the ego bounding box and agent~$j$ at timestep~$t$, and $c_{\min}^{j}$ the type-dependent minimum clearance (vehicle: 0.5\,m; cyclist: 1.0\,m; pedestrian: 1.5\,m).  The proxy severity is:
\begin{equation}
V_{\text{L0.R1}} = \sum_{t=1}^{T} \max_{j \in \mathcal{A}_t} \max\!\bigl(0,\, c_{\min}^{j} - c_t^{j}\bigr).
\end{equation}
\textit{Activation:} agents within 50\,m lateral range.

\medskip
\noindent\textbf{L0.R2 — Crosswalk occupancy.}
Let $A_t$ be the overlap area (m$^2$) between the ego OBB and a crosswalk polygon at timestep~$t$, computed via SAT.  The proxy severity is:
\begin{equation}
V_{\text{L0.R2}} = \sum_{t=1}^{T} A_t.
\end{equation}
\textit{Activation:} crosswalk polygon in map, pedestrian moving $\ge 0.3$\,m/s within 5\,m buffer.

\medskip
\noindent\textbf{L0.R3 — Collision avoidance (overlap).}
Let $p_t^{j,\text{long}}$ and $p_t^{j,\text{lat}}$ be the SAT penetration depths along the longitudinal and lateral axes between the ego OBB and agent~$j$ at timestep~$t$.  The proxy severity is:
\begin{equation}
V_{\text{L0.R3}} = \sum_{t=1}^{T}\sum_{j \in \mathcal{A}_t} \min\!\bigl(p_t^{j,\text{long}},\, p_t^{j,\text{lat}}\bigr).
\end{equation}
\textit{Activation:} other agents within 50\,m spatial pre-filter; penetration threshold $0.01$\,m.

\medskip
\noindent\textbf{L0.R4 — VRU clearance.}
Let $d_t^{j}$ be the edge-to-edge distance between the ego OBB and VRU~$j$ (pedestrian/cyclist) at timestep~$t$, and $r^{j}$ the type-dependent minimum clearance (pedestrian: 2.0\,m; cyclist: 1.5\,m).  The proxy severity is:
\begin{equation}
V_{\text{L0.R4}} = \sum_{t=1}^{T}\max_{j \in \mathcal{V}_t} \max\!\bigl(0,\, r^{j} - d_t^{j}\bigr).
\end{equation}
\textit{Activation:} VRUs present, ego speed $\ge 1.0$\,m/s.

%-------------------------------------------------------------------------------
\subsection*{Tier 1 — Legal}
%-------------------------------------------------------------------------------

\noindent\textbf{L1.R0 — Traffic signal compliance.}
Let $r_t \in \{0,1\}$ indicate a red signal phase at timestep~$t$, $y_t$ a yellow phase, $d_t^{\text{stop}}$ the distance to the stop line, and $a_t$ the ego acceleration.  The proxy severity accumulates a speed-weighted red-light penalty and a scaled yellow-acceleration penalty:
\begin{align}
V_{\text{L1.R0}} = \sum_{t=1}^{T}\Bigl[r_t &\cdot \mathbb{1}[d_t^{\text{stop}} < 5\,\text{m}] \cdot \min\!\bigl(1, v_t/10\bigr)\\
 &+ 0.3\, y_t \cdot \mathbb{1}[d_t^{\text{stop}} < 30\,\text{m}] \cdot \min\!\bigl(1, a_t/2\bigr)\Bigr].\nonumber
\end{align}
\textit{Activation:} stop-line geometry and per-timestep signal state available.

\medskip
\noindent\textbf{L1.R2 — Speed limit adherence.}
Let $v_t$ be ego speed and $v_t^{\lim}$ the local speed limit with tolerance $\Delta v{=}1.0$\,m/s.  The proxy severity is:
\begin{equation}
V_{\text{L1.R2}} = \sum_{t=1}^{T} \max\!\bigl(0,\, v_t - v_t^{\lim} - \Delta v\bigr).
\end{equation}
\textit{Activation:} speed-limit data available (map or default).

\medskip
\noindent\textbf{L1.R3 — Red-light stop compliance.}
Let $\text{cross}_t$ be a differentiable stop-line crossing measure and $r_t$ a soft red-signal indicator.  The proxy severity counts red-phase stop-line crossings:
\begin{equation}
V_{\text{L1.R3}} = \sum_{t=1}^{T} r_t \cdot \sigma_\alpha\!\bigl(\text{cross}_t - \text{cross}_{t-1}\bigr),
\end{equation}
where $\sigma_\alpha$ is a soft step with temperature $\alpha$. \textit{Activation:} red signal phases exist.

\medskip
\noindent\textbf{L1.R4 — Stop-sign compliance.}
Let $v_t^{\text{stop}}$ be the ego speed within a 5\,m window of the stop line and $d_t^{\text{past}}$ the depth past the line.  The proxy severity is:
\begin{equation}
V_{\text{L1.R4}} = v_t^{\text{stop,max}} \cdot \bigl(1 + d_t^{\text{past,max}} / 5\bigr).
\end{equation}
\textit{Activation:} stop-sign annotation within approach radius.

\medskip
\noindent\textbf{L1.R5 — Crosswalk yield.}
Let $\text{TTC}_t^j$ be time-to-collision with pedestrian~$j$ in a crosswalk, $v_t$ ego speed, and $\rho_t$ proximity to the crosswalk.  The proxy severity is:
\begin{align}
V_{\text{L1.R5}} = \min\!\bigl(5, (3 - &\text{TTC}_{\min})\cdot 2\bigr)+ \min\!\bigl(3, v_t/10\bigr)\\
&+ \min\!\bigl(2, (\rho_{\min} - d_{\text{cw}})/7.5\bigr),\nonumber
\end{align}
where $\text{TTC}_{\min}$ is the minimum TTC (safe threshold 3\,s), $d_{\text{cw}} {=}15$\,m crosswalk proximity. \textit{Activation:} crosswalk in map, VRU present, vehicle moving.

\medskip
\noindent\textbf{L1.R6 — Wrong-way driving.}
Let $\phi_t$ be the heading mismatch between the ego heading and the lane centerline direction, $v_t$ ego speed, and $\Delta t_{\text{viol}}$ the continuous duration of heading mismatch.  The proxy severity is:
\begin{align}
V_{\text{L1.R6}} = \phi_{\max,t} / 90^\circ \cdot 0.4 &+ \min(1, \Delta t_{\text{viol}} / 2) \cdot 0.4 \\
&+ \min(1, v_t / 10) \cdot 0.2.\nonumber
\end{align}
\textit{Activation:} lane centerline available, ego speed $\ge 0.5$\,m/s.

%-------------------------------------------------------------------------------
\subsection*{Tier 2 — Road}
%-------------------------------------------------------------------------------

\noindent\textbf{L2.R0 — Drivable surface.}
Let $\delta_t$ be the signed distance from the nearest drivable-surface boundary (negative = outside).  The proxy severity is:
\begin{equation}
V_{\text{L2.R0}} = \sum_{t=1}^{T} \max\!\bigl(0,\, -\delta_t - 0.5\bigr),
\end{equation}
with a 0.5\,m buffer inside the boundary. \textit{Activation:} lane centerline data available, vehicle moving.

\medskip
\noindent\textbf{L2.R1 — Lane departure.}
Let $d_t^{\text{lat}}$ be the signed lateral offset from the lane centerline and $w$ the half-lane width (1.75\,m).  The proxy severity is:
\begin{equation}
V_{\text{L2.R1}} = \sum_{t=1}^{T} \max\!\bigl(0,\, |d_t^{\text{lat}}| - w - m\bigr),
\end{equation}
where $m{=}0.05$\,m is a soft margin. \textit{Activation:} lane centerline and boundary type available.

%-------------------------------------------------------------------------------
\subsection*{Tier 3 — Comfort}
%-------------------------------------------------------------------------------

\noindent\textbf{L3.R0 — Smooth longitudinal acceleration.}
Let $a_t^{\text{long}}$ be the longitudinal acceleration at timestep~$t$ and $j_t$ the longitudinal jerk ($\Delta a / \Delta t$).  The proxy severity is:
\begin{equation}
V_{\text{L3.R0}} = \sum_{t=1}^{T} \max\!\bigl(0, |a_t^{\text{long}}| - 2.0\bigr) + \max\!\bigl(0, |j_t| - 2.0\bigr).
\end{equation}
\textit{Activation:} ego speed $\ge 0.5$\,m/s, $\ge 3$ frames of data.

\medskip
\noindent\textbf{L3.R1 — Smooth braking deceleration.}
Let $a_t^{\text{dec}} = \max(0, -a_t^{\text{long}})$ be the deceleration magnitude.  The proxy severity accumulates excess deceleration relative to a comfort limit $a_{\text{comfort}}{=}1.5$\,m/s$^2$:
\begin{equation}
V_{\text{L3.R1}} = \sum_{t=1}^{T} \max\!\bigl(0,\, a_t^{\text{dec}} - a_{\text{comfort}}\bigr) \cdot \Delta t.
\end{equation}
\textit{Activation:} ego speed $\ge 1.0$\,m/s.

\medskip
\noindent\textbf{L3.R2 — Smooth lateral steering.}
Let $\dot\psi_t$ be the heading rate (rad/s) and $\ddot\psi_t$ the angular jerk.  The proxy severity is:
\begin{align}
V_{\text{L3.R2}} = \sum_{t=1}^{T} &\max\!\bigl(0, |\dot\psi_t|(180/\pi) - 15\bigr) \\
&+ \max\!\bigl(0, |\ddot\psi_t|(180/\pi) - 15\bigr).\nonumber
\end{align}
\textit{Activation:} turning ($|\dot\psi_t| > 0.01$\,rad/s), vehicle moving.

\medskip
\noindent\textbf{L3.R3 — Speed consistency.}
Let $v_t$ be ego speed and $\sigma^2_{t,W}$ its variance over a rolling 2\,s window of size $W$.  The proxy severity penalises excess variance and oscillation:
\begin{align}
V_{\text{L3.R3}} = \sum_{t=W}^{T} \max\!\bigl(0, &\sigma_{t,W} - 2.0\bigr)\\
&+ \lambda_{\text{osc}} \cdot \mathbb{1}[\text{sign-changes} > 6],\nonumber
\end{align}
where $\lambda_{\text{osc}}{=}1.0$. \textit{Activation:} ego speed $\ge 0.5$\,m/s, $\ge W$ frames.

\medskip
\noindent\textbf{L3.R4 — Lane-change smoothness (jerk).}
Let $a_t^{\text{lat}}$ be lateral acceleration.  The proxy severity is:
\begin{equation}
V_{\text{L3.R4}} = \sum_{t=1}^{T} \max\!\bigl(0,\, |a_t^{\text{lat}}| - 1.5\bigr).
\end{equation}
\textit{Activation:} lateral velocity $\ge 0.1$\,m/s.

\medskip
\noindent\textbf{L3.R5 — Left-turn gap acceptance.}
Let $\text{TTC}_t^j$ be the time-to-collision with oncoming vehicle~$j$ during a left turn at timestep~$t$.  The proxy severity is:
\begin{equation}
V_{\text{L3.R5}} = \sum_{t \in \mathcal{T}_{\text{turn}}} \sum_{j \in \mathcal{A}_t^{\text{oncoming}}} \max\!\bigl(0,\, 1/\text{TTC}_t^j - 1/4\bigr),
\end{equation}
where the safe TTC threshold is 4\,s. \textit{Activation:} left turn (heading change $\ge 15^\circ$), oncoming vehicles within 50\,m.

\medskip
\noindent\textbf{L3.R9 — Cooperative lane change.}
Let $g_t^j$ be the time gap to vehicle~$j$ and $a_t^{\text{lat}}$ lateral acceleration during a lane change.  The proxy severity is:
\begin{equation}
V_{\text{L3.R9}} = 0.5 \sum_{t} \mathbb{1}[g_t^j < 2.0] + 0.2 \sum_{t} \mathbb{1}[|a_t^{\text{lat}}| > 0.5].
\end{equation}
\textit{Activation:} lateral displacement $\ge 2.5$\,m (lane change detected).

\medskip
\noindent\textbf{L3.R10 — Following distance.}
Let $g_t^{\text{time}}$ be the time gap to the nearest lead vehicle.  The proxy severity is:
\begin{equation}
V_{\text{L3.R10}} = \sum_{t=1}^{T} \max\!\bigl(0,\, 1 - g_t^{\text{time}}/2.0\bigr),
\end{equation}
where 2.0\,s is the two-second rule threshold. \textit{Activation:} lead vehicle in scene, ego speed $\ge 0.3$\,m/s.

\medskip
\noindent\textbf{L3.R11 — Intersection negotiation.}
Let $g_t^j$ be the entry gap to intersecting vehicle~$j$ and $v_t$ ego speed through the intersection (radius $r{=}20$\,m).  The proxy severity is:
\begin{equation}
V_{\text{L3.R11}} = 0.5\sum_t \mathbb{1}[g_t^j < 3.0] + 0.2\sum_t \mathbb{1}[v_t > 8.0].
\end{equation}
\textit{Activation:} $\ge 2$ agents, sustained intersection presence.

\medskip
\noindent\textbf{L3.R12 — Pedestrian interaction.}
Let $d_t^j$ be the ego-to-pedestrian edge distance and $v_t$ ego speed.  The proxy severity is:
\begin{equation}
V_{\text{L3.R12}} = \sum_{j \in \mathcal{V}_t^{\text{ped}}} \max\!\bigl(0, 1.5 - d_t^j\bigr) + \mathbb{1}[d_t^j < 3.0] \cdot \max\!\bigl(0, v_t - 6.7\bigr).
\end{equation}
\textit{Activation:} pedestrians within 10\,m.

\medskip
\noindent\textbf{L3.R13 — Cyclist interaction.}
Identical structure to L3.R12 with type-specific clearance (1.5\,m) and speed limit near cyclists (8.9\,m/s):
\begin{align}
V_{\text{L3.R13}} = \sum_{j \in \mathcal{V}_t^{\text{cyc}}} &\max\!\bigl(0, 1.5 - d_t^j\bigr) \\
&+ \mathbb{1}[d_t^j < 3.0] \cdot \max\!\bigl(0, v_t - 8.9\bigr).\nonumber
\end{align}
\textit{Activation:} cyclists within 10\,m.

%-------------------------------------------------------------------------------
\section{Experimental Subset Sampling Procedures}
\label{app:sampling}
%-------------------------------------------------------------------------------

This appendix specifies the exact sampling procedure and random seed for every evaluation subset used in the paper.  All subsets are drawn from WOMD v1.3.0 \texttt{validation\_interactive} unless stated otherwise.  The canonical evaluation uses the augmented validation split; all subset sizes below refer to base scenarios (before per-rule augmentation).

%-------------------------------------------------------------------------------
\subsection{Canonical Validation Benchmark (12{,}800 Scenarios)}
\label{app:sampling:canonical}
%-------------------------------------------------------------------------------

The primary selector benchmark evaluates on the full augmented \texttt{validation\_interactive} split, producing 43{,}219 evaluation instances.

\begin{itemize}[leftmargin=2em]
\item \textbf{Source:} All 150 WOMD v1.3.0 validation shards (\texttt{validation\_interactive/*.tfrecord}).
\item \textbf{Base scenarios:} 12{,}800 (all interactive scenarios, no downsampling).
\item \textbf{Augmented instances:} 43{,}219 (augmentation factor $\approx 3.4\times$; see Table~\ref{tab:data_augmentation}).
\item \textbf{Seed:} Not applicable (all scenarios used).
\end{itemize}

%-------------------------------------------------------------------------------
\subsection{Proxy--Full Evaluator Alignment Subset (1{,}000 Scenarios)}
\label{app:sampling:proxy_alignment}
%-------------------------------------------------------------------------------

Used in Table~\ref{tab:proxy_correlation_detailed} and Section~\ref{Sec:Accuracy_SectionRule-Compliance_Behavior_and_Efficiency}.

\begin{itemize}[leftmargin=2em]
\item \textbf{Source:} \texttt{validation\_interactive} base scenarios.
\item \textbf{Sampling:} Stratified random sample with \texttt{random.seed(42)}; strata defined by interaction type (v2v, v2p, v2c, other) in proportion to their occurrence in the full split.
\item \textbf{Coordinate alignment:} World-frame trajectories; no ego-centric re-centering applied, to match the coordinate frame of the full evaluator.
\item \textbf{Evaluation:} All 24 proxy rules + full-evaluator counterparts for all $\binom{6}{2}{=}15$ candidate pairs per scenario.
\end{itemize}

%-------------------------------------------------------------------------------
\subsection{Perturbation Robustness Subset (500 Scenarios)}
\label{app:sampling:perturbation}
%-------------------------------------------------------------------------------

Used in Table~\ref{tab:robustness_perturbations} and Section~\ref{Sec:Section_VIII_Verification_Invariants_Numerical_Stability_and_Integration_Tests}.

\begin{itemize}[leftmargin=2em]
\item \textbf{Source:} \texttt{validation\_interactive} base scenarios.
\item \textbf{Sampling:} Random sample with \texttt{numpy.random.seed(42)}, selecting the first 500 scenarios after shuffle.
\item \textbf{Perturbation protocol:} Each scenario evaluated 10 times per perturbation type (position: 3 noise levels; velocity: 3 noise levels; heading: 3 noise levels; map: 4 error types). Results averaged across repetitions within each perturbation type.
\item \textbf{Reported metric:} Change in per-rule violation rate (percentage points) relative to unperturbed evaluation.
\end{itemize}

%-------------------------------------------------------------------------------
\subsection{Protocol~B Audit Subset (337 Scenarios; Diagnostic Only)}
\label{app:sampling:protocolB}
%-------------------------------------------------------------------------------

This subset was used for early spot-checking and diagnostic auditing of Protocol~B (full 28 rules, oracle applicability). The main-text Protocol~B tables and ablations now report the full 43{,}219-instance augmented validation benchmark defined in Appendix~\ref{app:sampling:canonical}.

\begin{itemize}[leftmargin=2em]
\item \textbf{Source:} \texttt{validation\_interactive} base scenarios.
\item \textbf{Sampling:} Seeded random sample (\texttt{seed=42}) ensuring representation of all rule-triggering scenario types identified during dataset analysis.
\item \textbf{Rationale for size:} 337 scenarios provides $>97\%$ statistical power to detect a 5\,pp compliance difference at $\alpha{=}0.05$ (two-sided McNemar, $\pi_0{=}0.20$).
\item \textbf{Oracle labels:} Per-rule applicability ground truth from \texttt{waymo\_rule\_eval} metadata.
\end{itemize}

%-------------------------------------------------------------------------------
\subsection{Applicability Head Diagnostic Sample (500 GT / 337 RECTOR Scenarios)}
\label{app:sampling:applicability}
%-------------------------------------------------------------------------------

Used in Section~\ref{subsec:rule_catalog} and Figure~\ref{fig:f1_vs_support} (diagnostic analysis only). \emph{Note:} This sample was historically used to compute applicability F1 scores, but Table~\ref{tab:rule_catalog} now reports applicability status (18 invoked, 10 never invoked) based on the full 43,219-instance evaluation set.

\begin{itemize}[leftmargin=2em]
\item \textbf{GT sample:} 500 randomly sampled \texttt{validation\_interactive} scenarios (\texttt{seed=42}) with oracle applicability labels from \texttt{waymo\_rule\_eval}.
\item \textbf{RECTOR sample:} The 337-scenario Protocol~B subset, for which RECTOR predicted-applicability labels and oracle labels are both available.
\item \textbf{F1 computation (diagnostic):} Macro F1 per rule; binary positive = rule applies.  Rules with zero positive support in the sample report F1 = 0.0\%.  This diagnostic F1 is superseded by the full-set support counts in Table~\ref{tab:rule_catalog} and Figure~\ref{fig:f1_vs_support}.
\end{itemize}

%-------------------------------------------------------------------------------
\subsection{Edge-Case Regression Suite (367 Cases)}
\label{app:sampling:edgecases}
%-------------------------------------------------------------------------------

Used in Section~\ref{Sec:Section_VIII_Verification_Invariants_Numerical_Stability_and_Integration_Tests} (99.5\% pass rate).

\begin{itemize}[leftmargin=2em]
\item \textbf{Composition:} Manually constructed edge cases covering: zero-length segments (42), degenerate bounding boxes (31), track discontinuities (78), out-of-map queries (56), stationary ego (38), missing map prerequisites (67), high-acceleration profiles (55).
\item \textbf{Pass criterion:} Rule evaluation returns a finite, non-negative severity without triggering NaN/Inf propagation or assertion failures.
\item \textbf{Residual failures (0.5\%):} Logged with diagnostic traces; do not propagate to tier scores.
\end{itemize}

%-------------------------------------------------------------------------------
\subsection{Lexicographic Selection Verification (100 Instances)}
\label{app:sampling:lex_verification}
%-------------------------------------------------------------------------------

Used in Table~\ref{tab:semantic_invariants} (``Lexicographic selection: Verified across 100 selection instances'').

\begin{itemize}[leftmargin=2em]
\item \textbf{Sampling:} 100 randomly drawn scenarios (\texttt{seed=0}) from the 12{,}800-scenario validation split, with all $K{=}6$ candidates and their tier-score vectors.
\item \textbf{Verification:} The selector output is compared exhaustively against all possible candidate orderings; enumeration-order invariance is checked by re-running with permuted candidate indices.
\end{itemize}

%-------------------------------------------------------------------------------
\section{Statistical Methods}
\label{app:statistics}
%-------------------------------------------------------------------------------

This appendix documents the hypothesis tests, confidence interval procedures, and software references used in Sections~\ref{Sec:Experiments} and~\ref{Sec:Accuracy_SectionRule-Compliance_Behavior_and_Efficiency}.

%-------------------------------------------------------------------------------
\subsection{McNemar's Test (Paired Binary Compliance)}
\label{app:statistics:mcnemar}
%-------------------------------------------------------------------------------

McNemar's test is applied to compare whether two selectors (e.g., lexicographic vs.\ confidence-only) differ significantly in binary rule compliance on the same set of scenarios.

\noindent\textbf{Setup.}  For each scenario $i$, let $b_i = \mathbb{1}[\text{selector A violates}]$ and $c_i = \mathbb{1}[\text{selector B violates}]$. The discordant counts are $B = \sum_i b_i(1-c_i)$ (A violates, B complies) and $C = \sum_i c_i(1-b_i)$ (B violates, A complies).

\noindent\textbf{Statistic.}  For $B + C \ge 25$, the continuity-corrected statistic is:
\begin{equation}
\chi^2 = \frac{(|B - C| - 1)^2}{B + C},
\end{equation}
which is compared against $\chi^2_1$.  For small $B + C < 25$, the exact binomial test is used ($p$-value = $2 \cdot \text{Binomial}(B; B+C, 0.5)$ two-sided).

\noindent\textbf{Significance threshold.}  $\alpha = 0.05$ (two-sided).  All reported $p$-values are uncorrected; with 28 rules the Bonferroni threshold is $\alpha' = 0.0018$.

%H% \noindent\textbf{Software.}  \texttt{scipy.stats.mcnemar} (SciPy v1.11.4, Python 3.10).

%-------------------------------------------------------------------------------
\subsection{Wilcoxon Signed-Rank Test (Paired Continuous Metrics)}
\label{app:statistics:wilcoxon}
%-------------------------------------------------------------------------------

The Wilcoxon signed-rank test is applied to compare per-scenario selADE or tier scores between two selectors.  It is preferred over the paired $t$-test because selADE distributions are heavy-tailed (Section~\ref{Sec:Problem_Formulation}).

\noindent\textbf{Statistic.}  Let $d_i = x_i^A - x_i^B$ be per-scenario differences.  Ties ($d_i = 0$) are excluded; ranks are assigned to $|d_i|$ and the signed-rank sum $W^+$ is computed.

\noindent\textbf{Significance threshold.}  $\alpha = 0.05$ (two-sided).

%H% \noindent\textbf{Software.}  \texttt{scipy.stats.wilcoxon} (SciPy v1.11.4).

%-------------------------------------------------------------------------------
\subsection{Kolmogorov--Smirnov Test (Distribution Comparison)}
\label{app:statistics:ks}
%-------------------------------------------------------------------------------

The two-sample KS test is used to compare the distribution of per-scenario compliance scores between evaluation subsets (e.g., validation vs.\ test split, or protocol-A vs.\ protocol-B).

\noindent\textbf{Statistic.}  $D = \sup_x |F_1(x) - F_2(x)|$, where $F_1, F_2$ are empirical CDFs.

%H% \noindent\textbf{Software.}  \texttt{scipy.stats.ks\_2samp} (SciPy v1.11.4).

%-------------------------------------------------------------------------------
\subsection{Bootstrap Confidence Intervals}
\label{app:statistics:bootstrap}
%-------------------------------------------------------------------------------

Bootstrap 95\% confidence intervals are reported for violation rates and selADE.

\noindent\textbf{Procedure.}
\begin{enumerate}[leftmargin=2em]
\item Draw $B{=}10{,}000$ bootstrap resamples (with replacement) from the $N$ per-scenario measurements.
\item Compute the statistic (e.g., violation rate) on each resample.
\item Report the 2.5th and 97.5th percentiles of the bootstrap distribution as the 95\% CI.
\end{enumerate}

%H% \noindent\textbf{Software.}  \texttt{scipy.stats.bootstrap} (SciPy v1.11.4, method=\texttt{percentile}, \texttt{n\_resamples=10000}, \texttt{random\_state=42}).

%-------------------------------------------------------------------------------
\subsection{Spearman Rank Correlation}
\label{app:statistics:spearman}
%-------------------------------------------------------------------------------

Spearman $\rho$ is used to measure rank correlation between proxy severity and full-evaluator severity across all $K{=}6$ candidates per scenario (Table~\ref{tab:proxy_correlation_detailed}).

%H% \noindent\textbf{Software.}  \texttt{scipy.stats.spearmanr} (SciPy v1.11.4).

\noindent\textbf{Note on zero-variance rules.}  When one or both evaluators return zero variance (a constant value across all candidates), $\rho$ is undefined and reported as ``---''.  These rules achieve pairwise concordance $= 1.0$ trivially and are excluded from the mean $\rho$ computation; the mean $\rho{=}0.25$ is computed over the 10 rules with measurable variance in both evaluators.

%----------------------------------------
% % \subfile{Appendix_E_PluginContract}

\end{document}